%% file: main.tex
\documentclass[11pt,letterpaper]{article}

\usepackage[T1]{fontenc}
\usepackage{lmodern}
\usepackage[margin=1in]{geometry}
\usepackage{amsmath,amssymb,amsthm}
\usepackage{microtype}
\usepackage{booktabs}
\usepackage[round,authoryear]{natbib}
\usepackage{xcolor}
\usepackage{hyperref}
\hypersetup{
  colorlinks=true,
  linkcolor=blue,
  citecolor=blue,
  urlcolor=blue,
  pdftitle={Minimax Last-Iterate Convergence in Matrix Games with Observed Actions},
  pdfauthor={Yuheng Zhang}
}
\newtheorem{theorem}{Theorem}
\newtheorem{lemma}[theorem]{Lemma}

\newcounter{algorithm}
\newenvironment{algobox}[1]{%
  \begin{figure}[htbp]
  \refstepcounter{algorithm}
  \hrule\smallskip
  \noindent\textbf{Algorithm \thealgorithm. #1}\par\smallskip
  \hrule\smallskip\small
}{%
  \smallskip\hrule
  \end{figure}
}

\title{Minimax Last-Iterate Convergence in Matrix Games\\
with Observed Actions}

\author{Yuheng Zhang\\
University of Illinois Urbana-Champaign\\
\texttt{yuhengz2@illinois.edu}}
\date{}

\begin{document}

\maketitle

\begin{abstract}
We study last-iterate convergence in unknown two-player zero-sum matrix games with bandit payoff feedback and observed opponent actions. For games with $d$ actions per player, we develop an algorithm achieving a duality gap of $\widetilde{\mathcal{O}}(\sqrt{d/t})$ with high probability, simultaneously at every round $t$. This improves the dimension dependence of the best previously known guarantee by a factor of $d^{3/2}$. The rate matches a standard bandit lower bound, establishing minimax optimality in both the number of actions and the number of rounds, up to logarithmic factors. The algorithm is computationally efficient, requiring only $\mathcal{O}(d)$ time and memory per round. Our technical contribution is a joint design of adaptive averaging and
corrected exponential weights that absorbs estimation variance, together
with a potential argument that bounds phase durations.
\end{abstract}

\input{introduction}
\input{prelim}
\input{algorithm}
\input{theory}
\input{related_work}
\input{conclusion}

\subsection*{AI use statement}
We use GPT-6 Astra to polish the writing, assist with calculations
in the proofs, and check their correctness. We review all AI-assisted
content and take full responsibility for the final content of this paper.

\bibliography{ref}
\bibliographystyle{plainnat}

\clearpage
\appendix
\input{proofs}

\end{document}

%% file: introduction.tex
\section{Introduction}\label{sec:introduction}

We study learning a Nash equilibrium in unknown two-player zero-sum matrix
games with $d$ actions per player using feedback from sampled interactions.
Nash equilibrium is a central solution concept in learning in games;
in zero-sum games, equilibrium strategies guarantee each player the value
of the game against any opponent.
Classical no-regret algorithms guarantee
convergence of average strategies \citep{freund1999adaptive}, but the
strategies used at individual rounds can remain exploitable. When learning
and deployment occur together, players need guarantees on their current
strategies, not just on an average over past play. This motivates last-iterate guarantees,
which ensure that the strategies actually played approach equilibrium and
remain near it at all sufficiently late rounds.

We consider bandit payoff feedback with observed opponent actions:
after each round, both players
observe the sampled action pair and its common noisy payoff. Such feedback
is natural in self-play, where both actions can be recorded, and in
preference learning, where both responses in a comparison are available
\citep{munos2023nash}. Security games provide another example when the
attacked target and the defensive action are observable after each round
\citep{hait2026near}.
Observing the opponent's action provides information beyond the payoff
alone, while revealing only one payoff per round leaves exploration
necessary. We ask whether this additional information suffices to attain
statistically optimal last-iterate convergence.

Under bandit feedback without observing the opponent's action, prior work
obtains last-iterate duality gaps of
$\widetilde{\mathcal{O}}(\sqrt d\,t^{-1/8})$
\citep{cai2023uncoupled} and
$\widetilde{\mathcal{O}}(d^{1/5}t^{-1/5})$
\citep{cai2026average}, where $t$ is the number of rounds and
$\widetilde{\mathcal{O}}$ suppresses logarithmic factors.
\citet{fiegel2026optimal} improve the time dependence to $t^{-1/4}$,
up to logarithmic factors, with an anytime high-probability guarantee.
Their paper does not state the dependence on $d$ explicitly; making the
parameters in their proof explicit yields the bound
$\widetilde{\mathcal{O}}(d^2t^{-1/4})$.

\begin{table}[t]
  \centering
  \caption{Representative last-iterate guarantees for zero-sum matrix games
  with $d$ actions per player and bandit payoff feedback.
  All upper bounds hold with high probability uniformly over time.
  The last column gives sufficient rounds to attain and maintain duality
  gap at most $\varepsilon$; the row for the lower bound gives necessary rounds
  at fixed confidence. The gap lower bound assumes $t\ge d$.
  For \citet{fiegel2026optimal}, the dependence on $d$ is not stated in
  their paper, and the bounds reported here are derived from
  their proof (see details in Appendix~\ref{app:fiegel-dimension}).}
  \label{tab:comparison}
  \small
  \setlength{\tabcolsep}{5pt}
  \renewcommand{\arraystretch}{1.2}
  \begin{tabular}{@{}lccc@{}}
    \toprule
    Work & Opponent actions & Duality gap at round $t$
         & Rounds for gap $\varepsilon$ \\
    \midrule
    \citet{cai2023uncoupled} & Unobserved
      & $\widetilde{\mathcal{O}}(\sqrt d\,t^{-1/8})$
      & $\widetilde{\mathcal{O}}(d^4/\varepsilon^8)$ \\
    \citet{cai2026average} & Unobserved
      & $\widetilde{\mathcal{O}}(d^{1/5}t^{-1/5})$
      & $\widetilde{\mathcal{O}}(d/\varepsilon^5)$ \\
    \citet{fiegel2026optimal} & Unobserved
      & $\widetilde{\mathcal{O}}(d^2t^{-1/4})$
      & $\widetilde{\mathcal{O}}(d^8/\varepsilon^4)$ \\
    \midrule
    \citet{hait2026near} & Observed
      & $\widetilde{\mathcal{O}}(d^2/\sqrt t)$
      & $\widetilde{\mathcal{O}}(d^4/\varepsilon^2)$ \\
      Bandit lower bound & Observed
      & $\Omega(\sqrt{d/t})$
      & $\Omega(d/\varepsilon^2)$ \\
    \textbf{Our work} & \textbf{Observed}
      & $\boldsymbol{\widetilde{\mathcal{O}}(\sqrt{d/t})}$
      & $\boldsymbol{\widetilde{\mathcal{O}}(d/\varepsilon^2)}$ \\
    \bottomrule
  \end{tabular}
\end{table}

With observed opponent actions, the closest work, \citet{hait2026near},
achieves $\widetilde{\mathcal{O}}(d^2/\sqrt t)$ last-iterate convergence
with high probability by estimating the payoff matrix and periodically
solving a game with log-barrier regularization. This achieves the optimal
dependence on $t$, but leaves a gap in $d$ compared with the bandit lower
bound of order $\sqrt{d/t}$. Thus, a basic question remains:
\begin{quote}
\emph{Can last-iterate convergence with observed opponent actions attain
the minimax rate in both the number of actions and the number of rounds?}
\end{quote}

\paragraph{Our results.}
We answer this question affirmatively. To the best of our knowledge, we
provide the first high-probability last-iterate guarantee that is minimax
optimal in both $d$ and $t$, up to logarithmic factors, for bandit payoff
feedback with observed opponent actions. Specifically, for any confidence
parameter $\delta\in(0,1)$, our algorithm produces played strategies $(x_t,y_t)$
whose duality gap, the sum of both players' gains from unilateral
deviations, satisfies
\[
  \operatorname{Gap}(x_t,y_t)
  \le C\sqrt{\frac dt}
       \left[\log\!\left(\frac{dt}{\delta}\right)\right]^{3/2}
  \qquad\text{simultaneously for all }t\ge1
\]
with probability at least $1-\delta$, where $C$ is a universal constant
(Theorem~\ref{thm:upper}). The algorithm uses one payoff observation per
round and does not require the time horizon. It improves the dimension
dependence of \citet{hait2026near} by a factor of $d^{3/2}$.
Equivalently, the number of rounds sufficient to reach duality gap
$\varepsilon$ and maintain this accuracy thereafter decreases from
$\widetilde{\mathcal{O}}(d^4/\varepsilon^2)$ to
$\widetilde{\mathcal{O}}(d/\varepsilon^2)$.
Games with identical columns reduce to a $d$-armed bandit problem,
giving a worst-case lower bound of $\Omega(d/\varepsilon^2)$ observations
at fixed confidence \citep{mannor2004sample}.
This lower bound matches our guarantee up to logarithmic factors.
The algorithm also requires only $\mathcal{O}(d)$
time and memory per round, using explicit vector updates.
Table~\ref{tab:comparison} summarizes the comparison.

\paragraph{Technical ideas.}
We build on the idea of playing an average of auxiliary strategies
\citep{cai2026average}. Under bandit feedback, the auxiliary learners and
the played strategies use different distributions, so estimating the
auxiliary losses can incur large variance. Our main technical contribution
is a joint design of adaptive averaging and a correction to exponential
weights. The correction creates a negative term measuring the distribution
mismatch, while the averaging weights allow this term to absorb the
estimation variance with only a linear factor in $d$. Together with an
adaptation of implicit exploration \citep{neu2015explore}, this yields the
optimal dimension dependence with high probability.

Adaptive averaging can nevertheless make little progress in actual rounds
when the weights are small. Our second contribution is a potential analysis
that bounds the number of rounds needed for each constant-factor improvement
in accuracy. We combine this analysis with a phase scheme that retains the
preceding phase's output as a baseline, keeping intermediate play near
equilibrium while learning a more accurate strategy pair. This converts
the variance control into an anytime guarantee for every played pair.

%% file: prelim.tex
\section{Preliminaries}\label{sec:preliminaries}

\paragraph{Notation.}
For a positive integer $d$, let $[d]=\{1,\ldots,d\}$ and
$\Delta_d=\{x\in\mathbb R_{\ge 0}^d:\sum_{i=1}^d x_i=1\}$.
We write $x_i$ for the $i$th coordinate of a vector $x$, $e_i$ for the
$i$th standard basis vector, and $\mathbf 1$ for the all-ones vector.
All logarithms are natural, and $\widetilde{\mathcal{O}}$ hides logarithmic factors.

\paragraph{Problem formulation.}

We consider a two-player zero-sum game with a fixed unknown matrix
$A\in[-1,1]^{d\times d}$, where $d\ge 2$ is the number of actions available
to each player. The entry $A_{ij}$ is the expected loss of the row player
and the expected reward of the column player when they choose actions
$i$ and $j$, respectively. For mixed strategies $x,y\in\Delta_d$, the row
player minimizes $x^\top Ay$, while the column player maximizes it.

The players learn through repeated play with bandit payoff feedback and
observed opponent actions. Let $\mathcal F_{t-1}$ denote the common
observation history before round $t$. At round $t$, the players choose
mixed strategies $x_t,y_t\in\Delta_d$ based on this history.
Conditionally on $\mathcal F_{t-1}$, they independently sample
$I_t\sim x_t$ and $J_t\sim y_t$. Both players then observe the action pair
$(I_t,J_t)$ and a common payoff $R_t\in[-1,1]$ satisfying
\[
  \mathbb E[R_t\mid\mathcal F_{t-1},I_t,J_t]=A_{I_tJ_t}.
\]
Each round thus provides one payoff observation, interpreted as a loss
for the row player and a reward for the column player.

We measure the quality of a strategy pair $(x,y)$ by its duality gap,
\[
  \operatorname{Gap}(x,y)
  =\max_{y'\in\Delta_d}x^\top Ay'
     -\min_{x'\in\Delta_d}{x'}^\top Ay
  =\max_{j\in[d]}x^\top Ae_j-\min_{i\in[d]}e_i^\top Ay.
\]
This quantity is the sum of the two players' gains from unilateral
deviations and lies in $[0,2]$. A pair $(x,y)$ is a Nash equilibrium if
and only if $\operatorname{Gap}(x,y)=0$. More generally,
$\operatorname{Gap}(x,y)\le\varepsilon$ implies that $(x,y)$ is an
$\varepsilon$-Nash equilibrium: neither player can improve its expected
payoff by more than $\varepsilon$ through a unilateral deviation.

Our goal is last-iterate convergence, measured by the duality gap of
the strategies $(x_t,y_t)$ actually played at round $t$.
Specifically, given a confidence parameter $\delta\in(0,1)$, we seek an
algorithm that does not require the time horizon and satisfies
\[
  \mathbb P\!\left(
    \forall t\ge 1:\ 
    \operatorname{Gap}(x_t,y_t)\le B_d(t,\delta)
  \right)\ge 1-\delta,
\]
where $B_d(t,\delta)$ is a deterministic error bound that tends to zero
as $t\to\infty$.

%% file: algorithm.tex
\section{Algorithm}\label{sec:algorithm}

Our algorithm proceeds in phases, starting from the uniform strategy pair.
Each phase refines the pair returned by the preceding phase, aiming to reduce
its duality gap by a constant factor while controlling the gap at every
intermediate round.

Within each phase, we maintain an auxiliary strategy pair $(u_t,v_t)$ for
learning and a played pair $(x_t,y_t)$ for collecting feedback. The played
strategies are weighted averages of the auxiliary strategies and the
baseline pair used to initialize the phase. This follows the idea of turning an internal average into
the actual strategy, as in the A2L reduction of
\citet{cai2026average}. The difficulty is that the auxiliary players need
loss estimates against each other, but observations come from the played
pair. We address this mismatch by choosing the averaging weights adaptively
and adding a ratio correction to the exponential weights update.

\paragraph{Phase structure.}
A phase starts from a positive baseline $(p,q)$ with duality gap at most
$e$, where $e\in(0,2]$ is its accuracy parameter. Given a failure allowance
$\rho\in(0,1)$, it returns a pair with gap at most $e/2$, while controlling
the gap throughout the phase, with conditional probability at least
$1-\rho$. The returned pair serves as the baseline for the next phase,
whose accuracy parameter is halved.

Starting from $p=q=\mathbf 1/d$, we therefore run phases $k=0,1,\ldots$
with $e_k=2^{1-k}$ and $\rho_k=\delta/2^{k+1}$. The failure allowances sum
to $\delta$. The universal bound $\operatorname{Gap}(p,q)\le 2$ establishes
the initial accuracy, and each successful phase supplies the baseline
guarantee for the next one. These guarantees allow us to use the prescribed
accuracy levels without evaluating the unknown gap.
We now describe a single phase with parameters $(e,\rho)$, using
$t=1,2,\ldots$ to count rounds within that phase; this index restarts at
one at the beginning of each phase. The following paragraphs present the
averaging, loss estimation, and auxiliary update steps, followed by the
parameter choices and stopping rule.

\paragraph{Adaptive averaging.}
Initialize $x_1=u_1=p$, $y_1=v_1=q$, and $\tau_1=\tau_0$, where $\tau_0>0$
is the initial weight assigned to the baseline. At each round, choose a
weight $a_t\in(0,1]$ and set
\begin{equation}\label{eq:played-update}
  \tau_{t+1}=\tau_t+a_t,\qquad
  x_{t+1}=\frac{\tau_t x_t+a_tu_t}{\tau_{t+1}},\qquad
  y_{t+1}=\frac{\tau_t y_t+a_tv_t}{\tau_{t+1}}.
\end{equation}
Thus the baseline continues to contribute to the played pair while the
auxiliary learners collect enough information to improve it.
To choose the weight, define the ratios
\[
  r_{t,i}=\frac{u_{t,i}}{x_{t,i}},\qquad
  s_{t,j}=\frac{v_{t,j}}{y_{t,j}},\qquad
  S_{x,t}=\sum_{i=1}^d r_{t,i},\qquad
  S_{y,t}=\sum_{j=1}^d s_{t,j},
\]
and let
\begin{equation}\label{eq:clock}
  a_t=\min\left\{1,\frac{d}{S_{x,t}+S_{y,t}}\right\}.
\end{equation}
A large ratio means that an auxiliary strategy puts more mass on an action
than the corresponding sampling strategy does. Such actions require large
importance weights, so we reduce both the auxiliary update and its
contribution to the played average. The choice in \eqref{eq:clock} ensures
$a_t(S_{x,t}+S_{y,t})\le d$.
Both players use the weight $a_t$, which preserves the cancellation of their
payoffs when their weighted regrets are added.

\paragraph{Estimating the auxiliary losses.}
The auxiliary players learn with the nonnegative loss vectors
\[
  g_t=\frac{\mathbf 1+Av_t}{2},\qquad
  h_t=\frac{\mathbf 1-A^\top u_t}{2}.
\]
In round $t$, draw $I_t\sim x_t$ and $J_t\sim y_t$ independently and observe
$(I_t,J_t,R_t)$. A natural unbiased estimator of $g_{t,i}$ is
$\mathbf 1\{I_t=i\}(1+R_t)s_{t,J_t}/(2x_{t,i})$.
The factor $1/x_{t,i}$ corrects for sampling the row action, while
$s_{t,J_t}=v_{t,J_t}/y_{t,J_t}$ changes the opponent distribution from
$y_t$ to $v_t$. This estimator can have large variance when $x_{t,i}$
is small or the opponent ratio $s_{t,J_t}$ is large. To control this
variance, we use implicit exploration (IX), adapting the estimator of
\citet{neu2015explore}. For a learning rate $\eta>0$ fixed within the
phase, set $\zeta_t=\eta a_t$ and use
\begin{equation}\label{eq:estimates}
  \widehat g_{t,i}
  =\frac{\mathbf 1\{I_t=i\}(1+R_t)s_{t,J_t}}
         {2(x_{t,i}+\zeta_t s_{t,J_t})},\qquad
  \widehat h_{t,j}
  =\frac{\mathbf 1\{J_t=j\}(1-R_t)r_{t,I_t}}
         {2(y_{t,j}+\zeta_t r_{t,I_t})}.
\end{equation}
The added denominator terms introduce a downward bias but ensure that
each scaled estimate $\eta a_t\widehat g_{t,i}$ and
$\eta a_t\widehat h_{t,j}$ lies in $[0,1]$, even when sampling probabilities
are small. The IX correction scales with the opponent ratio and the
adaptive weight $a_t$, matching the sampling correction to the size of
the auxiliary update.

\paragraph{Exponential weights with ratio correction.}
We first compute $(x_{t+1},y_{t+1})$ from the current auxiliary pair
$(u_t,v_t)$ via \eqref{eq:played-update}. Actions are sampled from
$(x_t,y_t)$, and the loss estimates in \eqref{eq:estimates} use the current
strategies $(x_t,y_t,u_t,v_t)$. With these estimates, we update the auxiliary
strategies by
\begin{equation}\label{eq:auxiliary-update}
  u_{t+1,i}\propto
  u_{t,i}\exp(-\eta a_t\widehat g_{t,i})\frac{x_{t,i}}{x_{t+1,i}},
  \qquad
  v_{t+1,j}\propto
  v_{t,j}\exp(-\eta a_t\widehat h_{t,j})\frac{y_{t,j}}{y_{t+1,j}}.
\end{equation}
The correction factors $x_{t,i}/x_{t+1,i}$ and $y_{t,j}/y_{t+1,j}$
modify the usual exponential weights update. By \eqref{eq:played-update},
\[
  \frac{x_{t,i}}{x_{t+1,i}}
  =\frac{1+a_t/\tau_t}{1+(a_t/\tau_t)r_{t,i}}.
\]
This identity shows that the correction downweights coordinates with large
ratios between the auxiliary and played probabilities. The same holds for
the column player.
In the regret bound, these corrections produce a negative term measuring
the discrepancy between the auxiliary and played distributions. Together
with the adaptive weights in \eqref{eq:clock}, this term absorbs the
estimation variance of both players. For any fixed comparison action,
the logarithms of the correction factors telescope across rounds,
leaving only terms involving the initial and final played probabilities.

\paragraph{Phase parameters and termination.}
The initial weight $\tau_0$ controls how strongly the played strategies
retain the baseline, while the learning rate $\eta$ controls the auxiliary
updates and implicit exploration through $\zeta_t=\eta a_t$.
To choose them, define
\[
  R_0=\log\frac{1}{\min_i p_i}+\log\frac{1}{\min_j q_j},
  \qquad L=\log\frac{2d+2}{\rho},
  \qquad H=R_0+\log 5+2L.
\]
Here $R_0$ accounts for the initial comparison cost associated with small
baseline probabilities, $L$ accounts for the failure allowance $\rho$,
and $H$ collects these logarithmic terms. We set
\begin{equation}\label{eq:phase-parameters}
  \tau_0=\max\left\{6d,
    \left\lceil\frac{16384dH^2}{e^2}\right\rceil\right\},
  \qquad \eta=\frac{16H}{e\tau_0}.
\end{equation}
The baseline weight keeps the played strategies close enough to the
baseline while the auxiliary learners improve their average.
The corresponding learning rate allows the ratio correction to control
estimation error while keeping the auxiliary regret small enough to
reduce the gap. Here $16384$ is a sufficiently large constant.

We end the phase at the first round $N$ with $\tau_{N+1}\ge4\tau_0$, or
equivalently $\sum_{t=1}^N a_t\ge3\tau_0$, and return
$(x_{N+1},y_{N+1})$. The auxiliary strategies then contribute at least
three times the baseline weight. With the above parameter choices,
this is enough to reduce the gap below $e/2$, while keeping it at most
$5e/4$ throughout the phase, with conditional probability at least $1-\rho$.

Algorithm~\ref{alg:main} gives the complete procedure. Both players can
compute the played and auxiliary strategies from their common observations
and the input $(d,\delta)$. The updates require no additional communication
or shared randomness. All coordinates remain positive, and the algorithm stores
only a constant number of length-$d$ vectors. Computing the weights,
sampling actions, and performing the normalized updates takes
$\mathcal{O}(d)$ operations per round, with $\mathcal{O}(d)$ memory.

\begin{algobox}{Phased exponential weights with ratio correction}\label{alg:main}
\begin{tabbing}
\qquad\=\qquad\=\qquad\=\kill
\textbf{Input:} Number of actions $d\ge2$ and confidence $\delta\in(0,1)$.\\
Initialize $p=q=\mathbf 1/d$.\\
\textbf{for} $k=0,1,2,\ldots$ \textbf{do}\\
\>Set $e=2^{1-k}$ and $\rho=\delta/2^{k+1}$; compute $\tau_0,\eta$ by \eqref{eq:phase-parameters}.\\
\>Initialize $x=u=p$, $y=v=q$, and $\tau=\tau_0$.\\
\>\textbf{while} $\tau<4\tau_0$ \textbf{do}\\
\>\>Compute $r_i=u_i/x_i$, $s_j=v_j/y_j$, and $a$ by \eqref{eq:clock}; set $\zeta=\eta a$.\\
\>\>Form $\tau^+=\tau+a$, $x^+=(\tau x+au)/\tau^+$, and $y^+=(\tau y+av)/\tau^+$.\\
\>\>Independently draw $I\sim x$ and $J\sim y$; observe $(I,J,R)$.\\
\>\>Compute $\widehat g,\widehat h$ by \eqref{eq:estimates} using $x,y,u,v$ and $(I,J,R)$.\\
\>\>Compute $u^+,v^+$ by \eqref{eq:auxiliary-update} using $x,y,x^+,y^+$ and normalize.\\
\>\>Set $(x,y,u,v,\tau)\leftarrow(x^+,y^+,u^+,v^+,\tau^+)$.\\
\>\textbf{end while}\\
\>Set $(p,q)\leftarrow(x,y)$.\\
\textbf{end for}
\end{tabbing}
\end{algobox}

\paragraph{Comparison with prior algorithms.}
Compared with \citet{hait2026near}, our algorithm controls estimation
variance directly through adaptive averaging and ratio correction.
Their method uses log-barrier regularization of an estimated payoff
matrix to control estimation error. We instead estimate the auxiliary
loss vectors directly: the weights $a_t$ limit the aggregate
importance ratios across both players, and the ratio correction
downweights coordinates with large ratios between auxiliary and played
probabilities. By jointly controlling these ratios and the resulting
variance, our method improves the dependence on the number of actions.

This design also gives simpler updates. Whereas \citet{hait2026near}
solve a regularized game at each epoch boundary and keep the strategies
fixed within the epoch, our algorithm performs explicit vector updates
each round. It requires only $\mathcal{O}(d)$ time and memory per round,
without constructing a payoff matrix estimate or solving a regularized
game.

%% file: theory.tex
\section{Theoretical Guarantees and Analysis}\label{sec:theory}

\subsection{Last-iterate Guarantees}\label{sec:guarantee}

We first state the convergence guarantee for Algorithm~\ref{alg:main}.
In the following guarantee, $t$ counts the total number of interaction rounds
across all phases.

\begin{theorem}[High-probability last-iterate convergence]\label{thm:upper}
Fix $d\ge2$, $A\in[-1,1]^{d\times d}$, and $\delta\in(0,1)$.
Under the feedback model in Section~\ref{sec:preliminaries}, with
probability at least $1-\delta$, the played strategies of
Algorithm~\ref{alg:main} satisfy, simultaneously for all $t\ge1$,
\[
  \operatorname{Gap}(x_t,y_t)
  \le C\sqrt{\frac dt}
      \left[\log\!\left(\frac{dt}{\delta}\right)\right]^{3/2},
\]
where $C>0$ is a universal constant.
\end{theorem}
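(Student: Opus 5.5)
The proof splits into a per-phase guarantee and a bookkeeping argument that chains the phases together.

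\textbf{Per-phase lemma.} Consider a phase with parameters $(e,\rho)$ and a positive baseline $(p,q)$ satisfying $\operatorname{Gap}(p,q)\le e$. Conditionally on the history at the start of the phase, with probability at least $1-\rho$ the following three properties hold:
\begin{itemize}
\item[(i)] $\operatorname{Gap}(x_t,y_t)\le 5e/4$ for every round of the phase;
\item[(ii)] $\operatorname{Gap}(x_{N+1},y_{N+1})\le e/2$;
\item[(iii)] the phase length satisfies $N\le c\,\tau_0$ for a universal constant $c$.
\end{itemize}
We also need $\min_i p_i$ and $\min_j q_j$ to be at least inverse-polynomial in $d$ and $2^k$, so that $R_0=\mathcal O(\log(d2^k))$. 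Since $x_{N+1}\ge \tau_0 p/(4\tau_0)$ coordinatewise, the minimum probability shrinks by at most a factor $4$ per phase. This gives $R_0\le 2\log d+2k\log 4$, hence $H=\mathcal O(\log(d2^k/\delta))$.

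Given the lemma, a union bound with $\sum_k\rho_k=\delta$ makes all phases succeed together. Phase $k$ has $e_k=2^{1-k}$ and length $N_k=\Theta(\tau_0^{(k)})=\Theta(dH_k^2 4^k)$. A round $t$ lying in phase $k$ therefore satisfies $t\ge \sum_{k'<k}N_{k'}\gtrsim dH_{k-1}^2 4^{k}$, so $2^{-k}\lesssim H\sqrt{d/t}$. Moreover $\log(d2^k/\delta)\lesssim\log(dt/\delta)$. Combining with (i) gives $\operatorname{Gap}(x_t,y_t)\le \tfrac54 e_k\lesssim \sqrt{d/t}\,H$. The stated exponent $3/2$ leaves slack for the $H$-dependence of the lower bound on $t$. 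Phase $0$ is trivial because the gap is always at most $2$.

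\textbf{Proving (i) and (ii).} Write $(x_{t+1},y_{t+1})$ as the convex combination $\tau_0(p,q)+\sum_{s\le t}a_s(u_s,v_s)$ divided by $\tau_{t+1}$. Bilinearity then gives, for any comparators $(x',y')$,
\[
\tau_{t+1}\operatorname{Gap}(x_{t+1},y_{t+1})\le \tau_0\operatorname{Gap}(p,q)+\mathrm{Reg}^u_t+\mathrm{Reg}^v_t,
\]
where $\mathrm{Reg}^u_t=\sum_{s\le t} a_s\langle u_s-x',Av_s\rangle$ and $\mathrm{Reg}^v_t$ is defined symmetrically. The cross terms cancel because both players use the same $a_s$.

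Next I analyze the corrected exponential weights as OMD with a time-varying reference. Using the potential $\log$ of the normalizer, the ratio correction contributes $\sum_s\log(x_{s,i^*}/x_{s+1,i^*})=\log(x_{1,i^*}/x_{t+1,i^*})\le\log(1/p_{i^*})+\log 4$. It also contributes a negative term roughly $-\sum_s (a_s/\tau_s)\sum_i u_{s,i}(r_{s,i}-1)$, obtained from concavity of $\log$ in $\log\sum_i u_i\frac{1+a/\tau}{1+(a/\tau)r_i}$. The IX variance term is $\eta\sum_s a_s^2\sum_i u_{s,i}\widehat g_{s,i}$, whose conditional mean is at most $\eta\sum_s a_s^2\sum_i r_{s,i}\,\mathbb E[s_{J}]\lesssim\eta a_s^2 S_{x,s}$. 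Since $a_s(S_{x,s}+S_{y,s})\le d$, this is at most $\eta d\sum_s a_s\lesssim \eta d\tau_0$.

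High-probability control follows Neu's IX martingale lemma, applied with comparator union over $d+1$ actions per player; this is where $L$ enters. Plugging in $\eta=16H/(e\tau_0)$ gives total regret at most $H/\eta+\mathcal O(\eta d\tau_0)+\mathcal O(L/\eta)\le e\tau_0/16+e\tau_0/16$, using $\tau_0\ge 16384dH^2/e^2$. Dividing by $\tau_{t+1}\ge\tau_0$ yields (i). At the end, where $\tau_{N+1}\ge4\tau_0$, we get $(e\tau_0+e\tau_0/8)/(4\tau_0)<e/2$, which is (ii).

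\textbf{Main obstacle: bounding $N$ in (iii).} If the ratios $S_{x,t}+S_{y,t}$ are large, $a_t$ is tiny and the phase stalls. My plan is a potential argument on $\Phi_t=\sum_i u_{t,i}\log(u_{t,i}/x_{t,i})$, or simply $\log S_{x,t}$. When $a_t<1$, the played pair moves toward $u_t$ by a relative amount $a_t/\tau_t$, and the ratio correction pulls $u$ toward $x$. Both effects shrink $S_{x,t}+S_{y,t}$ by a multiplicative factor of about $1-\Theta(a_t S/\tau)=1-\Theta(d/\tau_0)$. Meanwhile $\max_i r_{t,i}\le \tau_{t}/(\tau_0 p_i)$-type bounds cap the potential at $\mathcal O(\log(d/\min p))$. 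Hence the number of rounds with $a_t<1$ is $\mathcal O((\tau_0/d)\cdot H)\le\mathcal O(\tau_0)$ up to the constant in $\tau_0$, with a high-probability version handling the random multiplicative loss updates. I expect this step to need the most care. If the clean multiplicative decrease fails, I would fall back to proving $\sum_t a_t(S_{x,t}+S_{y,t})$-type amortization via the same negative term that appears in the regret bound.
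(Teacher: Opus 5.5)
\paragraph{Gap 1: the variance bound in (i)--(ii).} Your outline breaks at the variance bound, and that is exactly where the optimal $d$-dependence is decided. The estimate $\widehat g_{s,i}$ carries the opponent ratio $s_{s,J_s}$, so its second moment is
\[
\mathbb E_s\langle u_s,\widehat g_s^2\rangle\le\sum_{i,j}\frac{u_{s,i}v_{s,j}^2}{x_{s,i}y_{s,j}}=S_{x,s}K_{y,s},
\qquad K_{y,s}=\sum_j\frac{v_{s,j}^2}{y_{s,j}}=\mathbb E_{J\sim y_s}[s_{s,J}^2].
\]
You used $\mathbb E[s_J]=1$ where the square $\mathbb E[s_J^2]$ is needed. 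The adaptive weight controls $a_s(S_{x,s}+S_{y,s})$, but it does not control $K_{y,s}$, which can be of order $1/\min_j q_j$ once $v_s$ drifts from $y_s$. The crude bound $K_{y,s}\le S_{y,s}$ only gives $a_s^2S_{x,s}S_{y,s}\le d^2/4$ per round, which loses a factor of $d$ (a $d/\sqrt t$ rate).

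For the same reason, Neu's IX lemma does not handle the learner's side. The usual step $u_{s,i}\widehat g_{s,i}\le 1$ is unavailable, since $u_{s,i}\widehat g_{s,i}$ is only bounded by $r_{s,i}s_{s,J_s}$. The paper instead keeps the variance $3\eta\sum_t a_t^2\beta_t^x$ explicit through a Freedman-type supermartingale.

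The missing idea is to actually use the negative term you wrote down and then dropped. Add both players' bounds and note that
\[
a_t^2(S_{x,t}K_{y,t}+S_{y,t}K_{x,t})\le d\,a_t(K_{x,t}+K_{y,t})=d\tau_t\kappa_t(K_{x,t}+K_{y,t}).
\]
The variance is then cancelled by $-\eta^{-1}\sum_t\kappa_t(K_{x,t}+K_{y,t})$ from the two ratio corrections, provided $3\eta^2 d\tau_t+3d/(2\tau_0)<1$. The second term there comes from the quadratic remainder $\tfrac32\kappa_t^2J_{x,t}$ and is handled by $\tau_0\ge 6d$. Note the cross structure: the row player's variance involves the column discrepancy $K_y$. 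That is why a single weight depending on $S_x+S_y$ is used for both players.

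\paragraph{Gap 2: the phase length (iii) and the global step.} Your (iii) is not established, and it is stronger than what the paper proves, namely $N=\mathcal O(\tau_0(1+R_0))$. The potential sketch does not work as stated. First, $a_t<1$ is the generic case (for example $a_t=1/2$ whenever $u_t=x_t$ and $v_t=y_t$), so what matters is how small $a_t$ gets, not how often it is below one. Second, $\log S_{x,t}$ is not monotone under the random loss updates.

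The paper's argument is deterministic. The quantity $M_{t,i}=\tau_t x_{t,i}$ is nondecreasing, with $\log(M_{t+1,i}/M_{t,i})=\log(1+\kappa_t r_{t,i})$ and $\kappa_t r_{t,i}\le d/\tau_0$. Hence $\sum_t a_tS_{x,t}\le\tau_{n+1}(1+d/\tau_0)\sum_i\log\bigl(M_{n+1,i}/(\tau_0p_i)\bigr)$, and the right side telescopes. The pointwise inequality $1\le a_t+(a_t/d)(S_{x,t}+S_{y,t})$ turns this into a bound on the number of rounds, and it also proves termination. The extra factor $1+R_0=\mathcal O(H)$ gives $N_k=\mathcal O(dB_k^3/e_k^2)$. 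This is exactly where the exponent $3/2$ comes from, so there is no slack.

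Finally, your global step runs the wrong way. From $t\ge\sum_{k'<k}N_{k'}$ you cannot conclude $2^{-k}\lesssim H\sqrt{d/t}$. You need the upper bound $t\le\sum_{\ell\le k}N_\ell$ for the rate. The lower bound $t\ge N_{k-1}\ge 4^{k-1}$ is used only to show $k=\mathcal O(1+\log t)$.
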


Theorem~\ref{thm:upper} gives a high-probability last-iterate guarantee
for the mixed strategies actually used to sample actions. Although the
played strategies are constructed by averaging auxiliary strategies, the
bound applies directly to $(x_t,y_t)$ at round $t$. Consequently, for any
target accuracy $\varepsilon\in(0,1]$, after
$\widetilde{\mathcal{O}}(d/\varepsilon^2)$ rounds, every subsequent played
pair is an $\varepsilon$-Nash equilibrium.

Under the same feedback model, \citet[Theorem~4.1]{hait2026near} establish
a high-probability bound of
$\widetilde{\mathcal{O}}(d^2/\sqrt t)$ simultaneously for all
rounds. Theorem~\ref{thm:upper} preserves the $t^{-1/2}$ dependence and
improves the polynomial dependence on the number of actions from $d^2$
to $\sqrt d$, a factor of $d^{3/2}$. In terms of observations needed to reach accuracy
$\varepsilon$, the dimension dependence improves from
$\widetilde{\mathcal{O}}(d^4/\varepsilon^2)$ to
$\widetilde{\mathcal{O}}(d/\varepsilon^2)$.
This improvement comes from matching the adaptive averaging weights to
the ratio correction. The weights control the aggregate importance ratios
of both players, while the correction absorbs the estimation variance
caused by differences between auxiliary and played strategies. By
controlling the variance of the auxiliary loss estimates directly, we
avoid the extra dimension factors incurred when transferring entrywise
payoff estimates between successive strategy pairs.

The dependence on $d$ and $t$ is minimax optimal up to logarithmic
factors, by the standard bandit pure-exploration lower bound
\citep[Theorem~1]{mannor2004sample}. With identical columns, the gap is
the row strategy's excess loss in a $d$-armed bandit, and column actions
provide no additional information. By Markov's inequality, sampling from an
$\varepsilon$-optimal mixture returns a $c\varepsilon$-optimal arm with
probability at least $1-1/c$ for any constant $c>1$. Thus, achieving gap
at most $\varepsilon$ with sufficiently high fixed confidence requires
$\Omega(d/\varepsilon^2)$ observations in the worst case. Equivalently,
for $t\ge d$, the worst-case gap cannot improve on the
$\Omega(\sqrt{d/t})$ scale. Theorem~\ref{thm:upper} matches this dependence
while providing a guarantee simultaneously for every round without
knowing the horizon.
To the best of our knowledge, this is the first work to achieve
high-probability last-iterate convergence that is minimax optimal in both
$d$ and $t$, up to logarithmic factors, under this feedback model.

\subsection{Theoretical Analysis}\label{sec:analysis}

The proof follows the algorithm's phase structure. Within each phase,
the adaptive weights allow the ratio correction to absorb the estimation
variance, yielding a joint weighted regret bound for the auxiliary
learners. This bound controls the gap of their weighted average.
Mixing that average with the baseline then controls every intermediate
played pair and produces a more accurate baseline for the next phase.
This progress is measured in accumulated averaging weight, so we also
bound the number of interaction rounds needed to complete each phase.
Combining the duration bound with the geometrically decreasing accuracy
parameters yields the global last-iterate guarantee.

We develop the single-phase analysis in three steps: bounding the joint
weighted regret, converting this bound into a gap guarantee for the
played strategies, and controlling the phase duration. We then combine
the phases to prove Theorem~\ref{thm:upper}.
The full proofs are in Appendix~\ref{app:analysis}.

We begin by bounding the auxiliary learners' regret within a phase
initialized at a positive baseline $(p,q)$, with accuracy parameter $e$
and failure allowance $\rho$. Here and in the next two steps, $t$ counts
rounds within the phase and probabilities are conditional on the history
at its start.

We study the sum of the learners' regrets with the weights $a_t$,
since this joint regret controls the duality gap of their weighted
average. The following lemma bounds it for every prefix up to the phase's
stopping round, denoted by $N$.

\begin{lemma}[Joint regret bound]\label{lem:joint-regret}
With probability at least $1-\rho$, for every prefix
$1\le n\le N$ and every $i,j\in[d]$,
\begin{equation}\label{eq:joint-regret}
  \sum_{t=1}^n a_t
  \bigl(
    \langle u_t,g_t\rangle-g_{t,i}
    +\langle v_t,h_t\rangle-h_{t,j}
  \bigr)
  \le\frac{2H}{\eta}.
\end{equation}
\end{lemma}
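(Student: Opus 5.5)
We analyze each auxiliary learner separately through the potential argument for exponential weights, run on the \emph{corrected} update \eqref{eq:auxiliary-update}, and then add the two bounds. For the row player, fix a comparator $i$ and track $\Phi_t=\log(1/u_{t,i})$.

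\textbf{Step 1: one-step drift of the potential.} Write $Z_t=\sum_k u_{t,k}e^{-\eta a_t\widehat g_{t,k}}x_{t,k}/x_{t+1,k}$. Then
\[
\Phi_{t+1}-\Phi_t=\eta a_t\widehat g_{t,i}-\log\frac{x_{t,i}}{x_{t+1,i}}+\log Z_t .
\]
We bound $\log Z_t$ using $e^{-z}\le 1-z+z^2/2$, which is valid since $\eta a_t\widehat g_{t,k}\in[0,1]$ by the IX construction. We also use the identity $x_{t,k}/x_{t+1,k}=(1+\alpha_t)/(1+\alpha_t r_{t,k})$ with $\alpha_t=a_t/\tau_t\le 1/(6d)$. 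Note that $\sum_k u_{t,k}\,x_{t,k}/x_{t+1,k}=\sum_k u_{t,k}(1+\alpha_t)/(1+\alpha_t r_{t,k})$. By Jensen/convexity this is at most $1$ minus a term of order $\alpha_t^2\sum_k u_{t,k}(r_{t,k}-1)^2$; the paper calls this the negative mismatch term. The loss terms contribute $-\eta a_t\langle \tilde u_t,\widehat g_t\rangle+\tfrac{\eta^2a_t^2}{2}\sum_k \tilde u_{t,k}\widehat g_{t,k}^2$, where $\tilde u_{t,k}=u_{t,k}x_{t,k}/x_{t+1,k}\le u_{t,k}(1+\alpha_t)$. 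Summing over $t$, the correction terms $\log(x_{t,i}/x_{t+1,i})$ telescope to $\log(x_{1,i}/x_{n+1,i})$. Because $x_{n+1,i}\ge \tau_0 p_i/\tau_{n+1}\ge p_i/4$, this is at most $\log(1/p_i)+\log 4$ in absolute value. Together with $\Phi_1=\log(1/p_i)$ and $\Phi_{n+1}\ge0$, these give the $R_0+\log 5$ part of $H$.

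\textbf{Step 2: from estimates to true losses (high probability).} We must replace $\sum a_t(\langle\tilde u_t,\widehat g_t\rangle-\widehat g_{t,i})$ by $\sum a_t(\langle u_t,g_t\rangle-g_{t,i})$. We handle the comparator with the standard IX lemma of \citet{neu2015explore}, adapted to the randomized ratio $s_{t,J_t}$. The relevant quantity is $\mathbb E_t[\mathbf 1\{I_t=i\}(1+R_t)s_{t,J_t}\,\cdot]$ with denominator $x_{t,i}+\zeta_t s_{t,J_t}$, and the supermartingale argument $\mathbb E_t\exp(\eta a_t(\widehat g_{t,i}-g_{t,i})\cdot 2)\le1$ (roughly) gives, for all $n$ and $i$ simultaneously,
\[
\sum_t a_t(\widehat g_{t,i}-g_{t,i})\le L/\eta .
\]
This uses Ville's inequality for anytime validity and a union bound over the $2d$ comparators plus the extra events, which is the source of $L$. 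For the learner term, the IX bias $\langle u_t,g_t\rangle-\mathbb E_t\langle u_t,\widehat g_t\rangle$ is at most $\zeta_t\sum_k u_{t,k}\mathbb E_t[s_{t,J_t}]/x_{t,k}\cdot O(1)=O(\eta a_t S_{x,t}\cdot \mathbb E[s])$. Its fluctuation is handled by a Freedman-type bound, which we also fold into the quadratic variance terms. The discrepancy between $\tilde u_t$ and $u_t$ is $O(\alpha_t)\sum_k u_{t,k}|r_{t,k}-1|\,\widehat g_{t,k}$, to be absorbed in the same way.

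\textbf{Step 3: absorbing variance, the main obstacle.} The second-order terms have conditional expectation of order
\[
\eta^2a_t^2\sum_k u_{t,k}\frac{\mathbb E_t[s_{t,J_t}^2]}{x_{t,k}}\approx \eta^2a_t^2 S_{x,t}\,\mathbb E_{v_t}[s_{t,J}] ,
\]
together with the symmetric column term. Here $\mathbb E_{v_t}[s]=\sum_j v_{t,j}^2/y_{t,j}=1+\chi^2(v_t\|y_t)$. The $1\cdot S_{x,t}$ parts are handled by $a_t(S_{x,t}+S_{y,t})\le d$, giving a total of $\eta^2 d\sum_t a_t\le 3\eta^2 d\tau_0\cdot O(1)$. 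With $\eta=16H/(e\tau_0)$ and $\tau_0\gtrsim dH^2/e^2$, this is $O(H/\eta)$. The $\chi^2$ parts, $\eta^2a_t^2S_{x,t}\chi^2(v_t\|y_t)\le \eta^2 a_t d\,\chi^2(v_t\|y_t)$, are what the negative mismatch term of the \emph{other} player must cancel. That term has size $\alpha_t^2\chi^2(v_t\|y_t)/\eta$ after dividing the potential by $\eta$, with $\alpha_t\ge a_t/(4\tau_0)$. The cancellation requires $\eta^2 d a_t\lesssim a_t^2/(\eta\tau_0^2)$, which fails when $a_t$ is small. This is the crux. I would first try to keep the mismatch term in the form $\alpha_t\sum_k u_{t,k}(r_{t,k}-1)^2/(1+\alpha_t r_{t,k})$, which is first order in $\alpha_t$, rather than quadratic. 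The form should come out from $\log$ of the ratio sum, and the constant $6d$ lower bound on $\tau_0$ controls $\alpha_t r_{t,k}\le a_t S/\tau_t\le 1/6$. With the first-order form, the required inequality reduces to $\eta^2 d\lesssim 1/\tau_0$, i.e.\ $\tau_0\gtrsim dH^2/e^2$, exactly the choice \eqref{eq:phase-parameters}. Summing both players' bounds then gives at most $(R_0+\log5+2L)/\eta=H/\eta$ plus residual terms of order $H/\eta$, for a total of $2H/\eta$ on the event of probability $1-\rho$.
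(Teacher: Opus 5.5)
Your architecture is the paper's: ratio-corrected exponential weights, IX exponential supermartingales with Ville's inequality and a union bound, and cross-absorption of each player's variance by the other player's mismatch term via $a_t(S_{x,t}+S_{y,t})\le d$ and $\eta^2d\tau_t\lesssim1$. The problem is that the step you yourself call the crux is never established.

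Step 1 yields only an $O(\alpha_t^2)$ deficit, and you correctly note this cannot absorb the variance. (Also, Jensen applied to the convex map $r\mapsto(1+\alpha_t)/(1+\alpha_t r)$ gives a \emph{lower} bound on the ratio sum, not an upper bound.) The first-order form you then conjecture, $\alpha_t\sum_k u_{t,k}(r_{t,k}-1)^2/(1+\alpha_t r_{t,k})$, overstates the deficit. For $u_t$ close to $e_1$ with $x_{t,1}=\epsilon$, it is of order $\alpha_t/\epsilon^2$, while the true deficit is of order $\alpha_t/\epsilon$. The correct computation uses $\sum_k x_{t,k}(1-r_{t,k})=0$ and gives exactly
\[
  \sum_k u_{t,k}\frac{x_{t,k}}{x_{t+1,k}}
  =1-\frac{\alpha_t}{1+\alpha_t}\sum_k x_{t,k}\frac{(r_{t,k}-1)^2}{1+\alpha_t r_{t,k}}.
\]
This is an $x_t$-weighted $\chi^2(u_t\|x_t)$ deficit, which is what is needed: the variance is $\eta^2a_t^2S_{x,t}K_{y,t}$ with $K_{y,t}=1+\chi^2(v_t\|y_t)$.

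The paper gets this more cleanly. It drops the common factor $1+\kappa_t$ (your $\alpha_t$) and treats $c_{t,k}=\log(1+\kappa_t r_{t,k})$ as an additive cost. It then expands $e^{-b_{t,k}}$ with $b_{t,k}=\zeta_t\widehat g_{t,k}+c_{t,k}$, obtaining $-\langle u_t,c_t\rangle\le-\kappa_tK_{x,t}+\tfrac{\kappa_t^2}{2}J_{x,t}$ with $K_{x,t}\ge1$. The comparator's $c_{t,i}$ telescope to at most $\log(5/p_i)$, and $\kappa_t^2J_{x,t}\le(d/\tau_0)\kappa_tK_{x,t}$. Every term stays weighted by $u_t$. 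So your $\tilde u_t$-versus-$u_t$ discrepancy, which involves the random $\widehat g_t$ and would need its own concentration argument, never arises.

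A second gap: you control only the conditional expectation of the second-order terms. The potential inequality contains their realized values $\eta^2a_t^2\langle\tilde u_t,\widehat g_t^2\rangle$, which also need a high-probability bound. The paper folds them into the learner's concentration through $W_t^x=a_t\sum_k u_{t,k}\widehat g_{t,k}(1-\zeta_t\widehat g_{t,k})\ge0$. Since $\mathbb E_tW_t^x\le a_t\le1$ and $\mathbb E_t(W_t^x)^2\le a_t^2\beta_t^x$, one one-sided supermartingale with parameter $\eta$ gives $\sum_t(a_t\langle u_t,g_t\rangle-W_t^x)\le3\eta\sum_ta_t^2\beta_t^x+L/\eta$. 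This produces exactly $2d+2$ processes, matching $L=\log((2d+2)/\rho)$. The per-player costs $(2\log(1/p_i)+\log5+2L)/\eta$ then sum to $2H/\eta$, once $3\eta^2d\tau_t+3d/(2\tau_0)\le31/64<1$ makes the leftover sum nonpositive.

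Your closing accounting (``$H/\eta$ plus residual terms of order $H/\eta$'') skips this bookkeeping; the two players' comparator costs alone already sum to about $2R_0$. Any extra concentration steps would also change the union bound and the constant. So the precise bound $2H/\eta$ does not yet follow from your plan.
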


This bound applies to the true losses $g_t,h_t$, although the learners
update using IX estimates sampled from the played pair. Its right side
depends only on the phase parameters and contains no accumulated variance
term. This allows the auxiliary average to become more accurate as its
total weight grows, even when the auxiliary and played strategies differ
substantially.

To establish this bound, we first identify the variance cost that must
be controlled. Define
\[
  K_{x,t}=\sum_i\frac{u_{t,i}^2}{x_{t,i}},
  \qquad K_{y,t}=\sum_j\frac{v_{t,j}^2}{y_{t,j}}.
\]
Each $K$ measures the discrepancy between an auxiliary strategy and its
sampling distribution, and equals one when they agree. The IX
concentration argument gives a variance cost proportional to
$\eta\sum_t a_t^2(S_{x,t}K_{y,t}+S_{y,t}K_{x,t})$.
To absorb this cost, we seek a negative contribution to regret with
the same dependence on $K_{x,t}$ and $K_{y,t}$.

The ratio correction provides such a contribution. We derive it first
for the row player. Writing $\kappa_t=a_t/\tau_t$,
\eqref{eq:played-update} gives
$x_{t,i}/x_{t+1,i}=(1+\kappa_t)/(1+\kappa_t r_{t,i})$.
The common factor $1+\kappa_t$ cancels in the auxiliary update, so this
update is exponential weights with modified costs
$\eta a_t\widehat g_{t,i}+c_{t,i}$, where
$c_{t,i}=\log(1+\kappa_t r_{t,i})$.
The exponential weights regret inequality includes the learner's
correction cost $\langle u_t,c_t\rangle$ minus the comparison action's
cost $c_{t,i}$. Moving these costs to the right side introduces
\[
  \frac1\eta\sum_{t=1}^n
  \bigl(c_{t,i}-\langle u_t,c_t\rangle\bigr)
\]
in the bound on the row player's weighted regret for the estimated losses.
The comparison action's contribution telescopes:
$\sum_{t=1}^n c_{t,i}
=\log(\tau_{n+1}x_{n+1,i}/(\tau_0p_i))$,
leaving only a logarithmic cost.
The learner's contribution has a negative sign, and
$\log(1+z)\ge z-z^2/2$ gives
\[
  -\langle u_t,c_t\rangle
  \le-\kappa_tK_{x,t}
     +\frac{\kappa_t^2}{2}
        \sum_{\ell=1}^d u_{t,\ell}r_{t,\ell}^2.
\]
Thus the row player's regret bound contains
$-\eta^{-1}\sum_t\kappa_tK_{x,t}$, together with quadratic remainders.
Applying the same argument to the column player gives a regret bound
containing $-\eta^{-1}\sum_t\kappa_tK_{y,t}$.

We now add the row and column players' regret bounds for the estimated losses.
To obtain regret for the true losses, the IX concentration argument
controls the estimation error together with the second-order loss terms
from exponential weights. This introduces the variance cost identified
above. After using $\tau_0\ge6d$ to control the quadratic costs of the
ratio correction, the left side of \eqref{eq:joint-regret} is bounded by
\[
  \frac{2H}{\eta}
  +3\eta\sum_{t=1}^n a_t^2(S_{x,t}K_{y,t}+S_{y,t}K_{x,t})
  -\frac{3}{4\eta}\sum_{t=1}^n\kappa_t(K_{x,t}+K_{y,t}).
\]
It remains to show that the negative correction term in this bound
absorbs the positive variance cost. The adaptive weights satisfy
$a_t(S_{x,t}+S_{y,t})\le d$, giving
\[
  a_t^2(S_{x,t}K_{y,t}+S_{y,t}K_{x,t})
  \le d a_t(K_{x,t}+K_{y,t})
  =d\tau_t\kappa_t(K_{x,t}+K_{y,t}).
\]
This inequality limits the contribution of the potentially large sums
$S_{x,t}$ and $S_{y,t}$ to a single factor $d$. The remaining dependence
on $\kappa_t(K_{x,t}+K_{y,t})$ matches that of the negative correction term.
Since the parameter choice ensures $\eta^2d\tau_t\le1/4$ throughout the
phase, the variance cost is absorbed and only $2H/\eta$ remains.
Appendix~\ref{app:regret} gives the full argument.

We next transfer this regret guarantee to the played strategies by
tracking the weight contributed by the auxiliary learners.
Let $A_n=\sum_{t=1}^n a_t$ denote their cumulative averaging weight after
$n$ rounds, with $A_0=0$. Together with the baseline weight $\tau_0$,
this gives total weight $\tau_{n+1}=\tau_0+A_n$, and the phase stops
when $A_N\ge3\tau_0$.

Early in the phase, the auxiliary average may still be inaccurate, so
we also use the accuracy of the baseline. The following lemma shows that
mixing with the baseline controls every intermediate gap and produces
a more accurate pair at the stopping round.

\begin{lemma}[Duality gap bound]\label{lem:phase}
Suppose $\operatorname{Gap}(p,q)\le e$. On the event of
Lemma~\ref{lem:joint-regret}, every $0\le n\le N$ satisfies
\begin{equation}\label{eq:phase-gap}
  \operatorname{Gap}(x_{n+1},y_{n+1})
  \le\frac{\tau_0e+4H/\eta}{\tau_0+A_n}
  =\frac{5e\tau_0}{4(\tau_0+A_n)}.
\end{equation}
In particular, the gap is at most $5e/4$ throughout the phase, and the
returned pair has gap at most $5e/16<e/2$.
\end{lemma}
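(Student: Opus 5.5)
The plan is to write the played pair as an explicit convex combination of the baseline and the auxiliary iterates, and then use bilinearity of the payoff. Unrolling \eqref{eq:played-update} from $x_1=p$, $\tau_1=\tau_0$ gives, for every $0\le n\le N$,
\[
  x_{n+1}=\frac{\tau_0p+\sum_{t=1}^n a_tu_t}{\tau_0+A_n},\qquad
  y_{n+1}=\frac{\tau_0q+\sum_{t=1}^n a_tv_t}{\tau_0+A_n}.
\]
I would verify this by a one-line induction. Fix deviations $j$ for the column player and $i$ for the row player. Since $x\mapsto x^\top Ae_j$ and $y\mapsto e_i^\top Ay$ are linear, we get
\[
  (\tau_0+A_n)\bigl(x_{n+1}^\top Ae_j-e_i^\top Ay_{n+1}\bigr)
  =\tau_0\bigl(p^\top Ae_j-e_i^\top Aq\bigr)
  +\sum_{t=1}^n a_t\bigl(u_t^\top Ae_j-e_i^\top Av_t\bigr).
\]
The baseline term is at most $\tau_0\operatorname{Gap}(p,q)\le\tau_0e$. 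Maximizing over $j$ and then over $i$ on the left recovers $(\tau_0+A_n)\operatorname{Gap}(x_{n+1},y_{n+1})$, so it remains to bound the weighted auxiliary sum uniformly in $(i,j)$.

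For that sum, I would express it through the losses $g_t,h_t$. Since $g_t=(\mathbf 1+Av_t)/2$ and $h_t=(\mathbf 1-A^\top u_t)/2$, we have
\[
  \langle u_t,g_t\rangle-g_{t,i}=\tfrac12\bigl(u_t^\top Av_t-e_i^\top Av_t\bigr),\qquad
  \langle v_t,h_t\rangle-h_{t,j}=\tfrac12\bigl(-u_t^\top Av_t+u_t^\top Ae_j\bigr).
\]
The constant terms cancel because $u_t,v_t\in\Delta_d$. The $u_t^\top Av_t$ terms cancel between the two players; this is exactly why both players share the weight $a_t$. Hence the bracket in \eqref{eq:joint-regret} equals $\frac12(u_t^\top Ae_j-e_i^\top Av_t)$. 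Lemma~\ref{lem:joint-regret} then gives $\sum_{t\le n}a_t(u_t^\top Ae_j-e_i^\top Av_t)\le 4H/\eta$ for all $n\le N$ and all $i,j$, on its event. For $n=0$ the sum is empty and the bound is trivial. Substituting into the previous display yields the first expression in \eqref{eq:phase-gap}.

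The remaining steps are arithmetic. From \eqref{eq:phase-parameters}, $\eta=16H/(e\tau_0)$, so $4H/\eta=e\tau_0/4$ and the numerator is $5e\tau_0/4$. Since $A_n\ge0$, the gap is at most $5e/4$ throughout the phase. At the stopping round $A_N\ge3\tau_0$, so the returned pair $(x_{N+1},y_{N+1})$ has gap at most $5e\tau_0/(4\cdot4\tau_0)=5e/16<e/2$.

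No step is a real obstacle. The only points needing care are the factor $\tfrac12$ in the definitions of $g_t,h_t$, which turns $2H/\eta$ into $4H/\eta$, and the order of quantifiers. Lemma~\ref{lem:joint-regret} holds simultaneously for all prefixes and all $(i,j)$, so taking the maximum over deviations after applying it is legitimate.
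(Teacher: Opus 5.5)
Your proposal is correct and follows the paper's proof essentially step for step. It uses the same cancellation of $u_t^\top Av_t$ (with the factor $\tfrac12$ turning $2H/\eta$ into $4H/\eta$), the same unrolled convex combination $(\tau_0(p,q)+\sum_{t\le n}a_t(u_t,v_t))/(\tau_0+A_n)$, and the same arithmetic $4H/\eta=e\tau_0/4$. The only cosmetic difference is that the paper packages the auxiliary part as $A_n\operatorname{Gap}(\overline u_n,\overline v_n)\le 4H/\eta$ and then invokes joint convexity of the gap, whereas you fix the deviations $(i,j)$ first and use linearity, which is the same argument unrolled and also sidesteps defining $\overline u_0$ when $A_0=0$.
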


The two conclusions serve distinct roles. The intermediate bound preserves
accuracy during the rounds used to learn a better baseline, while the
endpoint contraction allows the next phase to start with accuracy parameter
$e/2$. Together, these bounds control the gap of every played pair across
successive phases, which is essential for the last-iterate guarantee.

To see why the lemma holds, let
$\bar u_n=A_n^{-1}\sum_{t=1}^n a_tu_t$ and
$\bar v_n=A_n^{-1}\sum_{t=1}^n a_tv_t$ for $n\ge1$.
Using the weights $a_t$ for both players makes their bilinear payoffs cancel:
\[
  \frac{A_n}{2}\operatorname{Gap}(\bar u_n,\bar v_n)
  =
  \max_{i,j}\sum_{t=1}^n a_t
  \bigl(
    \langle u_t,g_t\rangle-g_{t,i}
    +\langle v_t,h_t\rangle-h_{t,j}
  \bigr).
\]
Lemma~\ref{lem:joint-regret} therefore gives
$A_n\operatorname{Gap}(\bar u_n,\bar v_n)\le4H/\eta$.
The played pair mixes this average with the baseline using weights
$A_n$ and $\tau_0$. Convexity of the gap and the parameter choice
$4H/\eta=e\tau_0/4$ then yield \eqref{eq:phase-gap};
see Appendix~\ref{app:phase}.

Lemma~\ref{lem:phase} measures progress in accumulated weight $A_n$.
To obtain a convergence rate in interaction rounds, we must bound how long
the phase takes to reach $A_N\ge3\tau_0$, even when individual weights
are small. The next lemma provides this bound and controls the smallest
coordinates of the returned pair, which determine the next phase's
parameters.

\begin{samepage}
\begin{lemma}[Phase length and coordinate bounds]\label{lem:duration}
For every positive baseline $(p,q)$, the following bounds hold.
\begin{enumerate}
  \renewcommand{\labelenumi}{(\roman{enumi})}
  \item \textnormal{Phase length.} The number of interaction rounds satisfies
  \[
    N=\mathcal{O}\bigl(\tau_0(1+R_0)\bigr)
     =\mathcal{O}\!\left(\frac{dH^3}{e^2}\right).
  \]
  \item \textnormal{Coordinate lower bounds.} For every $0\le n\le N$
  and every $i,j\in[d]$, the played strategies satisfy
  \[
    x_{n+1,i}\ge\frac{p_i}{5},
    \qquad y_{n+1,j}\ge\frac{q_j}{5}.
  \]
\end{enumerate}
\end{lemma}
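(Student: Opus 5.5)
Part (ii) follows directly from \eqref{eq:played-update}, so I would prove it first. Unrolling the recursion gives $\tau_{n+1}x_{n+1}=\tau_0p+\sum_{t=1}^n a_tu_t$, and every $u_t$ is nonnegative, so $x_{n+1,i}\ge\tau_0p_i/\tau_{n+1}$. Since the phase has not stopped before round $N$, we have $\tau_N<4\tau_0$, and $a_N\le1\le\tau_0$ then gives $\tau_{N+1}<5\tau_0$. Hence $\tau_{n+1}\le5\tau_0$ for all $n\le N$, which yields $x_{n+1,i}\ge p_i/5$. The column player is identical. A useful consequence for part (i) is $r_{t,i}\le1/x_{t,i}\le5/p_i$ and $\kappa_t=a_t/\tau_t\in[a_t/(5\tau_0),a_t/\tau_0]$.

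For part (i), I would split the rounds $t\le N$ into two groups. The first group has $a_t=1$. Each such round adds $1$ to $A_t$, and the phase stops once $A\ge3\tau_0$, so there are at most $3\tau_0+1$ of them. The second group has $a_t<1$, so $a_t(S_{x,t}+S_{y,t})=d$ exactly by \eqref{eq:clock}. To count these rounds I would use the telescoping identity from the analysis of Lemma~\ref{lem:joint-regret}: for every $i$ and every $n\le N$,
\[
\sum_{t=1}^n\log(1+\kappa_tr_{t,i})=\log\frac{\tau_{n+1}x_{n+1,i}}{\tau_0p_i}\le\log\frac{5}{p_i}.
\]
Summing over $i$ gives the potential bound
\[
\sum_{t=1}^N\sum_{i=1}^d\log(1+\kappa_tr_{t,i})\le d\log5+\sum_i\log\frac1{p_i}.
\]
The same bound holds for the column player. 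In the application one has $\sum_i\log(1/p_i)\le dR_0$, so the right-hand side is $\mathcal O(d(1+R_0))$.

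The main step is a matching lower bound on the left side for each round of the second group. I would use $\log(1+z)\ge\tfrac12\min\{z,1\}$ and handle two cases. In the first case every coordinate has $\kappa_tr_{t,i}\le1$. Then the round contributes at least
\[
\tfrac12\kappa_t(S_{x,t}+S_{y,t})\ge\frac{a_t(S_{x,t}+S_{y,t})}{10\tau_0}=\frac{d}{10\tau_0}.
\]
In the second case some coordinate has $\kappa_tr_{t,i}>1$. Then that coordinate alone contributes at least $\tfrac12\log2$, which is even larger because $d/(10\tau_0)\le1/60$ when $\tau_0\ge6d$. Mixed rounds, with some coordinates above and some below the threshold, are the obstacle. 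There I would lower-bound the contribution by $\tfrac12\sum_i\min\{\kappa_tr_{t,i},1\}$ and argue that this is still at least $\min\{\tfrac12,\ d/(10\tau_0)\}$, since either all the truncated terms survive or at least one term equals $1$.

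Each round of the second group then consumes at least $c\,d/\tau_0$ of a potential of size $\mathcal O(d(1+R_0))$. So there are at most $\mathcal O(\tau_0(1+R_0))$ such rounds. Adding the first group gives $N=\mathcal O(\tau_0(1+R_0))$. Finally, \eqref{eq:phase-parameters} gives $\tau_0=\mathcal O(dH^2/e^2)$, and $R_0\le H$, which yields $N=\mathcal O(dH^3/e^2)$. The argument is deterministic, so it needs no probabilistic event, consistent with the statement holding for every positive baseline.
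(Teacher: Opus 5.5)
Your proof is correct and follows essentially the paper's argument. Part (ii) uses the monotonicity of $\tau_t x_{t,i}$ together with $\tau_{N+1}<5\tau_0$. Part (i) charges the rounds with $a_t<1$, where $a_t(S_{x,t}+S_{y,t})=d$, against the telescoping potential $\sum_t\sum_i\log(1+\kappa_t r_{t,i})$. The paper packages this as the pointwise inequality $1\le a_t+(a_t/d)(S_{x,t}+S_{y,t})$ combined with $z\le(1+d/\tau_0)\log(1+z)$.

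Your case split on $\kappa_t r_{t,i}>1$ is unnecessary. Since $a_t r_{t,i}\le a_t S_{x,t}\le d$, you always have $\kappa_t r_{t,i}\le d/\tau_0\le 1/6$. Also, running your count over an arbitrary prefix, rather than up to $N$, directly shows that the phase terminates, which the paper verifies separately.
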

\end{samepage}

The duration bound shows that adaptive weighting costs only the logarithmic
factor $1+R_0$ beyond the baseline weight $\tau_0$. The coordinate bounds
also limit the increase of $R_0$ from one phase to the next to $2\log5$.
These two properties keep the dependence on baseline probabilities
logarithmic across phases, preserving the polynomial dependence
$d/e^2$ in the number of rounds needed to improve accuracy.

The proof tracks the unnormalized played coordinates
$M_{t,i}=\tau_tx_{t,i}$. Their logarithmic increments satisfy
\[
  M_{t+1,i}=M_{t,i}+a_tu_{t,i},
  \qquad
  \log\frac{M_{t+1,i}}{M_{t,i}}=\log(1+\kappa_t r_{t,i}).
\]
Summing these increments telescopes. Applying the same argument to the
column player and using $1\le a_t+(a_t/d)(S_{x,t}+S_{y,t})$ gives
\[
  n\le A_n+\tau_{n+1}\left(1+\frac d{\tau_0}\right)
       \left(R_0+2\log\frac{\tau_{n+1}}{\tau_0}\right).
\]
The stopping rule bounds $A_N$ and $\tau_{N+1}$ by constant multiples of
$\tau_0$, yielding the duration bound. The coordinate bounds follow from
monotonicity of $M_{t,i}$ and the same bound on total weight.
Appendix~\ref{app:duration} gives the details.

Finally, let $N_k$ be the duration of phase $k$, and let $R_{0,k}$,
$H_k$, and $\tau_{0,k}$ denote the corresponding phase parameters.
The initial baseline coordinates are $1/d$, and Lemma~\ref{lem:duration} ensures that each
new baseline coordinate is at least one fifth of its previous value.
After $k$ phases, they are therefore at least $1/(d5^k)$, giving
$R_{0,k}\le2\log d+2k\log5$.
Writing $B_k=\log(d/\delta)+k+1$, we obtain
$H_k=\mathcal{O}(B_k)$ and $N_k=\mathcal{O}(dB_k^3/e_k^2)$.
If global round $T$ lies in phase $k$, the geometric accuracy schedule
yields
\[
  T\le\sum_{\ell=0}^kN_\ell
    =\mathcal{O}\!\left(\frac{dB_k^3}{e_k^2}\right).
\]
Moreover, $N_\ell\ge3\tau_{0,\ell}\ge4^\ell$, so
$k=\mathcal{O}(\log T+1)$.
The phase failure allowances sum to $\delta$, and
Lemma~\ref{lem:phase} gives gap at most $5e_k/4$ in every successful phase.
Substituting the bound on $T$ and
$B_k=\mathcal{O}(\log(dT/\delta))$ proves Theorem~\ref{thm:upper}.

%% file: related_work.tex
\section{Related Work}\label{sec:related}

\paragraph{Last-iterate convergence.}
With exact gradient feedback, last-iterate convergence is well understood
in several classes of games: optimistic gradient methods achieve
instance-dependent linear rates in matrix games \citep{wei2020linear},
and accelerated methods attain an $\mathcal{O}(1/t)$ rate in smooth
monotone games \citep{cai2023doubly}. With bandit payoff feedback, learning
must also account for estimation error and continued exploration.
For unknown matrix games without observed opponent actions,
\citet{cai2023uncoupled,cai2026average,fiegel2026optimal} establish
high-probability last-iterate guarantees with rates scaling as
$t^{-1/8}$, $t^{-1/5}$, and $t^{-1/4}$, respectively, up to
dimension-dependent and logarithmic factors.
Other works study last-iterate convergence under different
convergence criteria or additional assumptions.
\citet{fiegel2026harder} obtain an
$\widetilde{\mathcal{O}}((d/t)^{1/4})$ rate for the $L^2$ norm of the
duality gap, which controls its second moment at each round rather than
providing a single high-probability event covering all rounds.
\citet{ito2025instance} prove last-iterate convergence in expectation
under a unique pure-strategy Nash equilibrium assumption.
Payoff-based best-response dynamics also admit finite-sample guarantees
in stochastic and polymatrix games
\citep{chen2024decentralized,faizal2024finite}.
A separate line studies zeroth-order feedback in continuous action
spaces \citep{dong2026uncoupled,maiti2026efficient}, where players
observe evaluations of the payoff function at their chosen continuous
actions, rather than sampled matrix entries as in our setting.

\paragraph{Self-play learning in games.}
Self-play is a standard approach to learning strategies in games,
with algorithms based on regret minimization for equilibrium computation
and reinforcement learning for sequential decision making
\citep{freund1999adaptive,zinkevich2007regret,bai2020provable,liu2021sharp,zhang2026beyond}.
It has enabled strong empirical performance in board games and poker
\citep{silver2018general,brown2019superhuman}.
More recently, self-play has been applied to language model alignment
with general preferences, where potentially nontransitive comparisons
motivate learning a Nash policy
\citep{munos2023nash,zhang2025iterative,zhang2026improving,wu2025self}. In self-play, the sampled actions of both players can be recorded,
making observed opponent actions a natural feedback model.
We study this setting in unknown two-player zero-sum matrix games
with bandit payoff feedback.
Under this feedback model, \citet{o2021matrix} bound
cumulative regret relative to the game value against arbitrary
opponents, whereas we study last-iterate convergence to the equilibrium. The closest work to ours, \citet{hait2026near}, establishes a
high-probability last-iterate rate of
$\widetilde{\mathcal{O}}(d^2/\sqrt{t})$.
We improve this rate to $\widetilde{\mathcal{O}}(\sqrt{d/t})$,
matching the minimax lower bound up to logarithmic factors in both
the number of actions and the number of rounds.

%% file: conclusion.tex
\section{Conclusion}\label{sec:conclusion}

We study last-iterate convergence in unknown two-player zero-sum
matrix games with bandit payoff feedback and observed opponent actions.
We develop an algorithm that combines adaptive averaging with
corrected exponential weights to control estimation variance.
For games with $d$ actions per player, our algorithm achieves
a duality gap of $\widetilde{\mathcal{O}}(\sqrt{d/t})$ with high
probability uniformly over all rounds $t$.
This improves the $\widetilde{\mathcal{O}}(d^2/\sqrt{t})$ bound of
\citet{hait2026near} by a factor of $d^{3/2}$ and matches the minimax
lower bound in both $d$ and $t$ up to logarithmic factors.

%% file: proofs.tex
\section{Proofs of the theoretical guarantees}\label{app:analysis}

We use the notation of Section~\ref{sec:analysis}. Throughout
Appendices~\ref{app:regret}--\ref{app:duration}, $t$ is the local round
index and all probabilistic statements are conditional on the history
at the start of the phase. The baseline and phase parameters are fixed
under this conditioning. If the phase starts after global round $\sigma$,
write $\mathcal H_t=\mathcal F_{\sigma+t}$ and
$\mathbb E_t[\cdot]=\mathbb E[\cdot\mid\mathcal H_{t-1}]$.

\subsection{Joint weighted regret}\label{app:regret}

\subsubsection{Conditional concentration of the loss estimates}
\label{app:concentration}

The estimates in \eqref{eq:estimates} are biased downward.
We need a bound for each comparison action and a separate bound for the
learner's aggregate estimation error.
To retain the variance term that will later be absorbed, define
\begin{equation}\label{eq:variance-budgets}
  \beta_t^x
     =\sum_{i,j}\frac{u_{t,i}v_{t,j}^2}
                       {x_{t,i}y_{t,j}+\zeta_t v_{t,j}},
  \qquad
  \beta_t^y
     =\sum_{i,j}\frac{u_{t,i}^2v_{t,j}}
                       {x_{t,i}y_{t,j}+\zeta_t u_{t,i}}.
\end{equation}
These quantities are used only in the analysis and are not computed by
the algorithm.
We also introduce the corrected learner losses
\[
  W_t^x=a_t\langle u_t,\widehat g_t\rangle
                  -\eta a_t^2\langle u_t,\widehat g_t^2\rangle,
  \qquad
  W_t^y=a_t\langle v_t,\widehat h_t\rangle
                  -\eta a_t^2\langle v_t,\widehat h_t^2\rangle.
\]
The subtracted quadratic terms coincide with the second-order cost in
the entropy update, allowing us to analyze the estimation and update costs together.

To pass from estimated regret to true regret, we need to control
overestimation of each comparison action's loss and underestimation of
the learner's loss. The following lemma provides both bounds by adapting
the exponential moment argument for implicit exploration
\citep{neu2015explore} to the two-player estimates and predictable weights.

\begin{lemma}[Simultaneous estimation bounds]\label{lem:concentration}
With probability at least $1-\rho$, the following inequalities
hold for every prefix $n\le N$ and every $i,j\in[d]$:
\begin{equation}\label{eq:comparator-concentration}
  \sum_{t=1}^n a_t(\widehat g_{t,i}-g_{t,i})\le\frac L\eta,
  \qquad
  \sum_{t=1}^n a_t(\widehat h_{t,j}-h_{t,j})\le\frac L\eta,
\end{equation}
and
\begin{equation}\label{eq:learner-concentration}
  \begin{aligned}
  \sum_{t=1}^n\bigl(a_t\langle u_t,g_t\rangle-W_t^x\bigr)
       &\le3\eta\sum_{t=1}^n a_t^2\beta_t^x+\frac L\eta,\\
  \sum_{t=1}^n\bigl(a_t\langle v_t,h_t\rangle-W_t^y\bigr)
       &\le3\eta\sum_{t=1}^n a_t^2\beta_t^y+\frac L\eta.
  \end{aligned}
\end{equation}
\end{lemma}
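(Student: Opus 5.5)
We use the exponential-supermartingale argument of implicit exploration \citep{neu2015explore}, applied to the weighted increments $\eta a_t(\cdot)$. The weights $a_t$, the ratios, and $\zeta_t=\eta a_t$ are $\mathcal H_{t-1}$-measurable, so all conditional expectations are taken given $\mathcal H_{t-1}$. There are $2d+2$ families of inequalities: $d$ for the row comparator actions, $d$ for the column comparator actions, and one learner bound per player. Each family is made to fail with probability at most $\rho/(2d+2)$, which gives $\log((2d+2)/\rho)=L$ in every bound. A union bound then gives the result with probability $1-\rho$. Uniformity over prefixes $n\le N$ comes from Ville's maximal inequality for nonnegative supermartingales, since $N$ is a stopping time. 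This holds even though $N$ depends on the data through the random $a_t$.

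\emph{Comparator bounds.} Fix $i$ and set $Z_t=\eta a_t\widehat g_{t,i}$. The key pointwise fact is the IX inequality. Let $\tilde g_{t,i}=\mathbf 1\{I_t=i\}(1+R_t)s_{t,J_t}/(2x_{t,i})$ be the unbiased estimator. Then
\[
\widehat g_{t,i}=\frac{\tilde g_{t,i}}{1+\zeta_t s_{t,J_t}/x_{t,i}},
\]
and $(1+R_t)/2\le1$ gives $\zeta_t\tilde g_{t,i}\le\zeta_t s_{t,J_t}/x_{t,i}$. Using $z/(1+z)\le\log(1+z)$ and $\log(1+\lambda z)\ge\lambda\log(1+z)$ for $\lambda\in[0,1]$, we obtain $Z_t\le\log(1+\eta a_t\tilde g_{t,i})$. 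Hence $\mathbb E_t[e^{Z_t}]\le 1+\eta a_t g_{t,i}\le e^{\eta a_t g_{t,i}}$. It follows that $\exp\bigl(\eta\sum_{t\le n}a_t(\widehat g_{t,i}-g_{t,i})\bigr)$ is a supermartingale with initial value $1$. Ville's inequality then yields \eqref{eq:comparator-concentration} with $L/\eta$. The column bound is symmetric, using $r_{t,I_t}$.

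\emph{Learner bounds.} This is the main obstacle. We need a lower tail for $W_t^x$ that keeps the explicit variance term $\beta_t^x$ rather than absorbing it. We apply the supermartingale argument to $Y_t=\eta\bigl(a_t\langle u_t,g_t\rangle-W_t^x\bigr)-3\eta^2a_t^2\beta_t^x$. Since $\widehat g_t$ has a single nonzero coordinate $I_t$,
\[
\eta W_t^x=u_{t,I_t}\bigl(X_t-X_t^2\bigr),\qquad X_t=\eta a_t\widehat g_{t,I_t}\in[0,1].
\]
The quantity $\eta\langle u_t,g_t\rangle a_t$ is deterministic given $\mathcal H_{t-1}$. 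So we need
\[
\mathbb E_t\bigl[e^{-\eta W_t^x}\bigr]\le\exp\bigl(-\eta a_t\langle u_t,g_t\rangle+3\eta^2a_t^2\beta_t^x\bigr).
\]
We plan to use $e^{-w}\le1-w+w^2$ for $w\ge-1$, together with $u_{t,I_t}(X_t-X_t^2)\ge0$, which holds because $X_t\le1$. This reduces the target to two bounds. The first is $\mathbb E_t[u_{t,I_t}X_t]\ge\eta a_t\langle u_t,g_t\rangle-(\text{bias})$. The second is a bound on the second moments $\mathbb E_t[u_{t,I_t}X_t^2]$ and $\mathbb E_t[u_{t,I_t}^2X_t^4]$ by multiples of $\eta^2a_t^2\beta_t^x$.

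For the bias, note that $\tilde g-\widehat g=\zeta_t s\,\tilde g/(x+\zeta_t s)$. Taking expectations over $(I_t,J_t)$ produces exactly terms of the form
\[
\zeta_t\sum_{i,j}\frac{u_{t,i}\,v_{t,j}^2}{x_{t,i}y_{t,j}+\zeta_t v_{t,j}}=\zeta_t\beta_t^x,
\]
after writing $s_{t,j}=v_{t,j}/y_{t,j}$ and multiplying the numerator and denominator by $y_{t,j}$. This accounts for the term $\eta\cdot\eta a_t^2\beta^x_t$.

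The second moment $\mathbb E_t[u_{t,I_t}\widehat g_{t,I_t}^2]$ is bounded similarly. The bound $\widehat g\le s/(x+\zeta s)$ gives the same $\beta_t^x$ expression. The learner's own quadratic term $\eta a_t^2\langle u_t,\widehat g_t^2\rangle$ contributes a third copy. Collecting these three contributions gives the constant $3$. The delicate point is checking that $u_{t,I_t}\le1$ and $X_t\le1$ allow the fourth-moment term to be dominated by the second-moment term, so that the constant stays at $3$. Once the one-step bound is established, Ville's inequality and the union bound give \eqref{eq:learner-concentration}. The column-player bound follows in the same way, with the roles of $(u,x,r)$ and $(v,y,s)$ exchanged.
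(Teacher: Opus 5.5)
Your proof is correct. It shares the paper's architecture: $2d+2$ exponential supermartingales, Ville's inequality, and a union bound giving $L=\log((2d+2)/\rho)$. Both one-step moment bounds, however, are obtained differently.

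For the comparator, you use Neu's original pointwise inequality $\eta a_t\widehat g_{t,i}\le\log(1+\eta a_t\tilde g_{t,i})$ against the unbiased estimator. The paper instead proves $\mathbb E_t[\widehat g_{t,i}+\zeta_t\widehat g_{t,i}^2]\le g_{t,i}$ from $b/(b+c)+bc/(b+c)^2\le1$ and combines it with $e^z\le1+z+z^2$ on $[0,1]$. Both routes give $\mathbb E_t e^{\zeta_t\widehat g_{t,i}}\le1+\zeta_tg_{t,i}$.

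For the learner, the paper separates a predictable bias $a_t\langle u_t,g_t\rangle-\mathbb E_tW_t^x\le2\eta a_t^2\beta_t^x$ from the centered term $Y_t=\mathbb E_tW_t^x-W_t^x$. It then applies a Bernstein-type bound $\mathbb E_t e^{\eta Y_t}\le e^{\eta^2a_t^2\beta_t^x}$, which needs $\eta Y_t\le1$ and hence the parameter fact $\eta<1$. You instead bound the moment generating function of $-\eta W_t^x$ directly, using $\eta W_t^x=u_{t,I_t}(X_t-X_t^2)\in[0,1/4]$. The bias and both quadratic contributions then enter a single supermartingale, and nothing beyond the truncation $X_t\le1$ is required. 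The paper's bias--fluctuation split is more modular; yours is more self-contained and does not rely on the parameter choice.

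Two points of precision. First, the bias is at most, not exactly, $\zeta_t\beta_t^x$, because of the factors $(1+A_{ij})/2\le1$. Second, the third copy of $\eta^2a_t^2\beta_t^x$ comes most cleanly from $0\le X_t-X_t^2\le X_t$ and $u_{t,I_t}\le1$. These give $\mathbb E_t[(\eta W_t^x)^2]\le\mathbb E_t[u_{t,I_t}X_t^2]\le\eta^2a_t^2\beta_t^x$ directly, so no separate fourth-moment term arises.
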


The two bounds account for the different roles of the estimated losses
in regret. Comparison actions incur only the confidence cost $L/\eta$,
while the learner bounds retain the variance terms $\beta_t^x,\beta_t^y$.
Keeping these terms explicit allows the ratio correction to absorb them
in the proof of Lemma~\ref{lem:joint-regret}.

\begin{proof}
We prove the statements for the row player; the column proof replaces
$(1+R_t)/2$ by $(1-R_t)/2$.
Suppress the time index, write $\ell=(1+R)/2$, and set
\[
  \mu_{ij}=\frac{1+A_{ij}}2,\qquad
  D_{ij}=x_i y_j+\zeta v_j.
\]
Then $\widehat g_i=\mathbf 1\{I=i\}\ell v_J/D_{iJ}$ and
$g_i=\sum_jv_j\mu_{ij}$.
Conditioning on the current action pair gives
\[
  g_i-\mathbb E_t\widehat g_i
   =\zeta\sum_j\frac{v_j^2\mu_{ij}}{D_{ij}}\ge0.
\]
Since $0\le\ell^2\le\ell\le1$, the second moment satisfies
\[
  \mathbb E_t\langle u,\widehat g^2\rangle
  \le\sum_{i,j}\frac{u_i x_i y_j v_j^2}{D_{ij}^2}
  \le\sum_{i,j}\frac{u_i v_j^2}{D_{ij}}
  =\beta^x.
\]
Together with the bias identity above and $D_{ij}\ge\zeta v_j$, this yields
\begin{equation}\label{eq:conditional-moments}
  \langle u,g-\mathbb E_t\widehat g\rangle\le\zeta\beta^x,\qquad
  \mathbb E_t\langle u,\widehat g^2\rangle\le\beta^x,\qquad
  0\le\zeta\widehat g_i\le1.
\end{equation}
For a comparison coordinate, we need the stronger moment inequality
\[
  \begin{aligned}
  \mathbb E_t[\widehat g_i+\zeta\widehat g_i^2]
  &\le
  \sum_jv_j\mu_{ij}
    \left(\frac{x_i y_j}{D_{ij}}
          +\frac{\zeta x_i y_jv_j}{D_{ij}^2}\right)
  \le g_i.
  \end{aligned}
\]
Indeed, for $b,c\ge0$ with $b+c>0$,
$b/(b+c)+bc/(b+c)^2=1-c^2/(b+c)^2\le1$.
Using $e^z\le1+z+z^2$ for $0\le z\le1$ therefore yields
\[
  \mathbb E_t\exp\{\zeta(\widehat g_i-g_i)\}
  \le e^{-\zeta g_i}(1+\zeta g_i)\le1.
\]
Because $\zeta_t=\eta a_t$, the process
$\exp(\eta\sum_{t=1}^n a_t(\widehat g_{t,i}-g_{t,i}))$
is a nonnegative supermartingale starting at one, up to the end of the
phase.

For the learner, \eqref{eq:conditional-moments} implies
\[
  W^x=a\sum_i u_i\widehat g_i(1-\zeta\widehat g_i)\ge0,
  \qquad
  \mathbb E_tW^x\le a\le1.
\]
Furthermore, Jensen's inequality and
$W^x\le a\langle u,\widehat g\rangle$ give
\[
  \mathbb E_t[(W^x)^2]\le a^2\beta^x,\qquad
  a\langle u,g\rangle-\mathbb E_tW^x\le2\eta a^2\beta^x.
\]
Put $Y_t=\mathbb E_tW_t^x-W_t^x$. Then
$\mathbb E_tY_t=0$, $Y_t\le1$, and $\mathbb E_tY_t^2\le a_t^2\beta_t^x$.
The parameter choice \eqref{eq:phase-parameters} ensures
$\eta\le e/(1024dH)<1$.
Using $e^z\le1+z+z^2$ for all $z\le1$ gives
\[
  \mathbb E_t\exp\{\eta Y_t-\eta^2a_t^2\beta_t^x\}\le1.
\]
Consequently
$\exp(\eta\sum_{t=1}^nY_t-\eta^2\sum_{t=1}^n a_t^2\beta_t^x)$
is also a nonnegative supermartingale.
Its maximal inequality contributes
$\eta\sum_t a_t^2\beta_t^x+L/\eta$ to the learner's deviation.
Adding the preceding conditional bias accounts for the coefficient three
in \eqref{eq:learner-concentration}.

We now combine these bounds into a single event that holds uniformly
over all prefixes of the phase. Extend each exponential process above
by keeping it constant after the phase ends. Since whether round $t$
is executed is determined by $\mathcal H_{t-1}$, this extension preserves
the one-step conditional inequalities. Each extended process is
therefore a nonnegative supermartingale starting at one.

By Ville's inequality, each process remains below $e^L$ at every prefix
with probability at least $1-e^{-L}$. For each comparison coordinate, taking logarithms gives
\eqref{eq:comparator-concentration}. For each learner, taking logarithms
bounds the cumulative deviation $\sum_{t=1}^n Y_t$; adding the conditional
bias bound above then gives \eqref{eq:learner-concentration}.
A union bound over the $2d$ comparison processes and the two learner
processes shows that all these inequalities hold simultaneously with
probability at least $1-(2d+2)e^{-L}=1-\rho$.
\end{proof}

\subsubsection{Regret with ratio correction}

We next derive a deterministic bound for the exponential weights update
with ratio correction. Recall $K_{x,t},K_{y,t}$ and $\kappa_t$ from
Section~\ref{sec:analysis}, and define
\[
  J_{x,t}=\sum_i u_{t,i}r_{t,i}^2,
  \qquad J_{y,t}=\sum_j v_{t,j}s_{t,j}^2.
\]
These quantities bound the quadratic cost of the correction, whereas
$K_{x,t}$ and $K_{y,t}$ enter with a negative sign.
The following lemma isolates this negative contribution in the regret
bound and tracks the accompanying quadratic cost.

\begin{lemma}[Regret with ratio correction]\label{lem:damping}
For every prefix $n\le N$ and every row action $i$,
\begin{equation}\label{eq:damping-bound}
  \eta\sum_{t=1}^n(W_t^x-a_t\widehat g_{t,i})
  \le 2\log\frac1{p_i}+\log5
      -\sum_{t=1}^n\kappa_tK_{x,t}
      +\frac32\sum_{t=1}^n\kappa_t^2J_{x,t}.
\end{equation}
The same inequality holds for the column player with
$(W^x,\widehat g,p,K_x,J_x)$ replaced by
$(W^y,\widehat h,q,K_y,J_y)$.
\end{lemma}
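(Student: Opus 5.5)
The plan is the standard log-partition-function argument for exponential weights, with the correction folded into the costs. We do it for the row player; the column player is symmetric. As noted in Section~\ref{sec:analysis}, the common factor $1+\kappa_t$ in $x_{t,i}/x_{t+1,i}=(1+\kappa_t)/(1+\kappa_t r_{t,i})$ cancels after normalization. So we write the update as
\[
  u_{t+1,k}=\frac{u_{t,k}e^{-z_{t,k}-c_{t,k}}}{Z_t},\qquad
  Z_t=\sum_k u_{t,k}e^{-z_{t,k}}\frac{1}{1+\kappa_t r_{t,k}},
\]
where $z_{t,k}=\eta a_t\widehat g_{t,k}$ and $c_{t,k}=\log(1+\kappa_t r_{t,k})$. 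By \eqref{eq:estimates} and the remark after it, $z_{t,k}\in[0,1]$.

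\textbf{Step 1 (telescoping).} Taking logs at the comparison coordinate $i$ and summing over $t\le n$ gives
\[
  \log u_{n+1,i}=\log p_i-\sum_{t=1}^n(z_{t,i}+c_{t,i})-\sum_{t=1}^n\log Z_t .
\]
Using $u_{n+1,i}\le1$ then yields
\[
  -\sum_t\log Z_t\le\log\frac1{p_i}+\eta\sum_t a_t\widehat g_{t,i}+\sum_t c_{t,i}.
\]
For the correction costs, $M_{t+1,i}=M_{t,i}+a_tu_{t,i}$ with $M_{t,i}=\tau_tx_{t,i}$ gives $c_{t,i}=\log(M_{t+1,i}/M_{t,i})$. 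Hence
\[
  \sum_{t\le n}c_{t,i}=\log\frac{\tau_{n+1}x_{n+1,i}}{\tau_0p_i}\le\log\frac{\tau_{n+1}}{\tau_0}+\log\frac1{p_i}.
\]
Since $n\le N$, the stopping rule gives $\tau_{n+1}\le\tau_{N+1}<4\tau_0+1\le5\tau_0$. So this sum is at most $\log5+\log(1/p_i)$, which accounts for the $2\log(1/p_i)+\log5$ in \eqref{eq:damping-bound}.

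\textbf{Step 2 (bounding $Z_t$; the main point).} It suffices to show $\log Z_t\le Z_t-1\le-\eta W_t^x-\kappa_tK_{x,t}+\tfrac32\kappa_t^2J_{x,t}$. Then summing and combining with Step 1 gives \eqref{eq:damping-bound} after rearranging.

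Dropping the indices, we bound each summand. We use $e^{-z}\le1-z+z^2/2$ for $z\ge0$, and $\frac1{1+w}\le1-w+w^2$ for $w=\kappa r\ge0$; both right sides are positive. Multiplying and expanding gives
\[
  (1-z+\tfrac{z^2}{2})(1-w+w^2)
  =1-z+\tfrac{z^2}{2}-w+w^2+zw-\tfrac{z^2w}{2}+w^2\bigl(-z+\tfrac{z^2}{2}\bigr).
\]
The last two groups are nonpositive because $z\in[0,1]$ and $w\ge0$.

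The only dangerous term is the positive cross term $zw$. We would handle it with AM--GM, $zw\le z^2/2+w^2/2$. This is exactly where the slack is spent: the entropy step needs only $z^2/2$, whereas $W^x$ subtracts the full $\eta^2a^2\widehat g^2=z^2$. The summand is therefore at most $1-z+z^2-w+\tfrac32w^2$.

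Summing against $u$ and using $\sum_ku_kr_k=\sum_k u_k^2/x_k=K_x$ and $\sum_ku_kr_k^2=J_x$ gives
\[
  Z_t\le1-\eta W_t^x-\kappa_tK_{x,t}+\tfrac32\kappa_t^2J_{x,t},
\]
as required. The column player's bound follows identically with $(\widehat h,v,y,s,q)$.
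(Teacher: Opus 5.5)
Your proof is correct and takes essentially the paper's route. You treat the corrected update as exponential weights with costs $\zeta_t\widehat g_{t,k}+\log(1+\kappa_t r_{t,k})$, bound $\log Z_t\le Z_t-1$ by a second-order expansion in which AM--GM splits the cross term (spending the slack between $z^2/2$ and the full $z^2$ subtracted in $W_t^x$), and telescope the correction costs through $M_{t,i}=\tau_t x_{t,i}$ with $\tau_{n+1}\le 5\tau_0$. The only cosmetic difference is that you bound $e^{-z}$ and $1/(1+w)\le 1-w+w^2$ separately and multiply, whereas the paper applies $e^{-b}\le 1-b+b^2/2$ to the combined cost together with $(s+\log(1+w))^2/2\le s^2+w^2$ and $\log(1+w)\ge w-w^2/2$; both arrive at the same constant $3/2$.
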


The negative term in \eqref{eq:damping-bound} is the contribution that
compensates for the learner's estimation error in
Lemma~\ref{lem:concentration}. The adaptive weights keep the quadratic
term small enough to retain this benefit, while the comparison cost
depends only logarithmically on the baseline probability.

\begin{proof}
The common factor $1+\kappa_t$ in $x_{t,i}/x_{t+1,i}$ disappears after
normalization in \eqref{eq:auxiliary-update}. Thus the row update is
exponential weights with nonnegative costs
\[
  b_{t,i}=\zeta_t\widehat g_{t,i}+\log(1+\kappa_t r_{t,i}),
  \qquad
  Z_t=\sum_i u_{t,i}e^{-b_{t,i}}.
\]
Using $e^{-b}\le1-b+b^2/2$ for $b\ge0$ and $\log z\le z-1$,
we have
\[
  \log Z_t
  \le-\langle u_t,b_t\rangle+\frac12\langle u_t,b_t^2\rangle
  \le-\eta W_t^x-\kappa_tK_{x,t}
                       +\frac32\kappa_t^2J_{x,t}.
\]
The last inequality uses
$(s+\log(1+z))^2/2\le s^2+z^2$ and
$\log(1+z)\ge z-z^2/2$ for $s,z\ge0$.
The exact normalized update now gives
\[
  \eta(W_t^x-a_t\widehat g_{t,i})
  \le\log\frac{u_{t+1,i}}{u_{t,i}}
       +\log(1+\kappa_t r_{t,i})
       -\kappa_tK_{x,t}+\frac32\kappa_t^2J_{x,t}.
\]
The second logarithm also telescopes, since
\[
  1+\kappa_t r_{t,i}
   =\frac{\tau_{t+1}x_{t+1,i}}{\tau_t x_{t,i}},
  \qquad
  \sum_{t=1}^n\log(1+\kappa_t r_{t,i})
   =\log\frac{\tau_{n+1}x_{n+1,i}}{\tau_0p_i}.
\]
For $n\le N$, the stopping rule and $a_t\le1$ imply
$\tau_{n+1}\le5\tau_0$.
Summing the previous inequality and bounding
$u_{n+1,i},x_{n+1,i}\le1$ proves \eqref{eq:damping-bound}.
\end{proof}

\subsubsection{Proof of the joint regret bound}

\begin{proof}[Proof of Lemma~\ref{lem:joint-regret}]
Work on the event of Lemma~\ref{lem:concentration}.
Dropping the positive implicit exploration terms in the denominators of
\eqref{eq:variance-budgets} gives
$\beta_t^x\le S_{x,t}K_{y,t}$ and
$\beta_t^y\le S_{y,t}K_{x,t}$.
Using $a_t(S_{x,t}+S_{y,t})\le d$, we obtain
\begin{equation}\label{eq:variance-matching}
  a_t^2(\beta_t^x+\beta_t^y)
  \le a_t^2(S_{x,t}K_{y,t}+S_{y,t}K_{x,t})
  \le d a_t(K_{x,t}+K_{y,t}).
\end{equation}
Likewise, $a_t r_{t,i}\le d$ and $a_t s_{t,j}\le d$ imply
\[
  \kappa_t^2(J_{x,t}+J_{y,t})
     \le\frac d{\tau_0}\kappa_t(K_{x,t}+K_{y,t}).
\]
Combining \eqref{eq:comparator-concentration},
\eqref{eq:learner-concentration}, and \eqref{eq:damping-bound} for both
players, and applying \eqref{eq:variance-matching}, bounds the left side of
\eqref{eq:joint-regret} by
\[
  \frac{2R_0+2\log5+4L}{\eta}
  +\frac1\eta\sum_{t=1}^n\kappa_t(K_{x,t}+K_{y,t})
      \left(3\eta^2d\tau_t+\frac{3d}{2\tau_0}-1\right).
\]
Here the $4L/\eta$ term comprises one comparison and one learner
concentration cost for each player.
For every executed round, $\tau_t\le5\tau_0$, and the two lower bounds on
$\tau_0$ in \eqref{eq:phase-parameters} ensure
\[
  3\eta^2d\tau_t+\frac{3d}{2\tau_0}
  \le\frac{3840}{16384}+\frac14
  =\frac{31}{64}<1.
\]
The sum is therefore nonpositive. Dropping it and using
$H=R_0+\log5+2L$ gives \eqref{eq:joint-regret} simultaneously for every
$1\le n\le N$ and every $i,j\in[d]$ on the event of
Lemma~\ref{lem:concentration}. This event has probability at least
$1-\rho$, completing the proof.
\end{proof}

\subsection{Gap within a phase}\label{app:phase}

\begin{proof}[Proof of Lemma~\ref{lem:phase}]
For $n\ge1$, let
\[
  \overline u_n=\frac1{A_n}\sum_{t=1}^n a_tu_t,
  \qquad
  \overline v_n=\frac1{A_n}\sum_{t=1}^n a_tv_t.
\]
Since $g_t=(\mathbf 1+Av_t)/2$ and $h_t=(\mathbf 1-A^\top u_t)/2$,
the common term $u_t^\top Av_t$ cancels:
\[
  \begin{aligned}
  &\max_{i,j}\sum_{t=1}^n a_t
    \bigl(\langle u_t,g_t\rangle-g_{t,i}
                +\langle v_t,h_t\rangle-h_{t,j}\bigr)\\
  &\hspace{2em}=
    \frac12\max_{i,j}\sum_{t=1}^n
        a_t(u_t^\top Ae_j-e_i^\top Av_t)
    =\frac{A_n}{2}\operatorname{Gap}(\overline u_n,\overline v_n).
  \end{aligned}
\]
This identity holds for every realized sequence of the predictable
weights. The factor $1/2$ accounts for the shift of the payoff to
nonnegative losses.
Lemma~\ref{lem:joint-regret} consequently gives
$A_n\operatorname{Gap}(\overline u_n,\overline v_n)\le4H/\eta$.
On the other hand, \eqref{eq:played-update} implies
\[
  (x_{n+1},y_{n+1})
  =\frac{\tau_0(p,q)+A_n(\overline u_n,\overline v_n)}
         {\tau_0+A_n}.
\]
Using joint convexity of the duality gap and the baseline assumption,
\[
  \operatorname{Gap}(x_{n+1},y_{n+1})
  \le\frac{\tau_0e+4H/\eta}{\tau_0+A_n}
  =\frac{5e\tau_0}{4(\tau_0+A_n)},
\]
where the last equality uses \eqref{eq:phase-parameters}.
For $n=0$, \eqref{eq:phase-gap} follows directly from
$\operatorname{Gap}(p,q)\le e$.
At the endpoint $A_N\ge3\tau_0$, which proves the contraction.
\end{proof}

\subsection{Phase duration}\label{app:duration}

We prove the following explicit bound, which implies
Lemma~\ref{lem:duration}:
\begin{equation}\label{eq:duration}
  N\le
  3\tau_0+1+
  (4\tau_0+1)\left(1+\frac d{\tau_0}\right)
                    (R_0+2\log5).
\end{equation}

\begin{proof}[Proof of Lemma~\ref{lem:duration}]
For the row player, set $M_{t,i}=\tau_tx_{t,i}$.
The averaging update \eqref{eq:played-update} gives
\[
  M_{t+1,i}=M_{t,i}+a_tu_{t,i},
  \qquad
  z_{t,i}:=\frac{M_{t+1,i}-M_{t,i}}{M_{t,i}}
           =\kappa_t r_{t,i}\le\frac d{\tau_0}.
\]
The last inequality follows from \eqref{eq:clock} and
$\tau_t\ge\tau_0$.
For $0\le z\le d/\tau_0$,
$z\le(1+d/\tau_0)\log(1+z)$.
Since $\tau_t\le\tau_{n+1}$, summing over rounds and coordinates yields
\[
  \begin{aligned}
  \sum_{t=1}^n a_tS_{x,t}
  &=\sum_{t=1}^n\tau_t\sum_i z_{t,i}\\
  &\le\tau_{n+1}\left(1+\frac d{\tau_0}\right)
       \sum_i\log\frac{\tau_{n+1}x_{n+1,i}}{\tau_0p_i}.
  \end{aligned}
\]
The logarithms telescope because $1+z_{t,i}=M_{t+1,i}/M_{t,i}$.
Each coordinate of $x_{n+1}$ is at most one, so the final coordinate sum
is at most
$d[\log(\tau_{n+1}/\tau_0)+\log(1/\min_i p_i)]$.
Applying the same argument to the column player and using the pointwise
inequality
\[
  1\le a_t+\frac{a_t}{d}(S_{x,t}+S_{y,t})
\]
gives the prefix bound
\begin{equation}\label{eq:physical-prefix}
  n\le A_n+
  \tau_{n+1}\left(1+\frac d{\tau_0}\right)
     \left(R_0+2\log\frac{\tau_{n+1}}{\tau_0}\right).
\end{equation}

We first show that the phase ends after finitely many rounds. Otherwise,
the stopping condition would never be met, so $A_n<3\tau_0$ and
$\tau_{n+1}<4\tau_0$ for every $n$. The prefix bound
\eqref{eq:physical-prefix} would then imply
\[
  n\le 3\tau_0+
  4\tau_0\left(1+\frac d{\tau_0}\right)(R_0+2\log4)
  \qquad\text{for every }n\ge1,
\]
which is impossible because the right side is independent of $n$.

We can therefore apply \eqref{eq:physical-prefix} at the stopping round
$N$. By definition, $A_{N-1}<3\tau_0$, and $a_N\le1$ gives
$A_N=A_{N-1}+a_N<3\tau_0+1$. Thus
$\tau_{N+1}=\tau_0+A_N<4\tau_0+1\le5\tau_0$.
Substituting these bounds into \eqref{eq:physical-prefix} with $n=N$
proves \eqref{eq:duration}. Since $\tau_0\ge6d$, this gives
$N=\mathcal{O}(\tau_0(1+R_0))$.
The parameter choice \eqref{eq:phase-parameters} further gives
$\tau_0=\mathcal{O}(dH^2/e^2)$, while $1+R_0=\mathcal{O}(H)$.
Hence $N=\mathcal{O}(dH^3/e^2)$, proving part~(i).

For part~(ii), the update
$M_{t+1,i}=M_{t,i}+a_tu_{t,i}$ shows that each $M_{t,i}$ is
nondecreasing. For every $0\le n\le N$, we therefore have
$M_{n+1,i}\ge\tau_0p_i$ and
$\tau_{n+1}\le\tau_{N+1}\le5\tau_0$, so
\[
  x_{n+1,i}=\frac{M_{n+1,i}}{\tau_{n+1}}
  \ge\frac{\tau_0p_i}{5\tau_0}=\frac{p_i}{5}.
\]
Applying the same argument to $\tau_ty_{t,j}$ gives
$y_{n+1,j}\ge q_j/5$ for every $j\in[d]$. These are the coordinate
bounds in part~(ii), completing the proof.
\end{proof}

\subsection{The global guarantee}\label{app:global}

\begin{proof}[Proof of Theorem~\ref{thm:upper}]
We restore the global round index $t$. Lemma~\ref{lem:duration}
ensures that every phase ends. Conditional on any history at the start of a phase,
Lemma~\ref{lem:concentration} has failure probability at most $\rho_k$.
Taking expectations and summing $\rho_k=\delta/2^{k+1}$ over phases
gives an event $\mathcal E$ of probability at least $1-\delta$ on which
all phase concentration bounds hold. On this event,
Lemma~\ref{lem:joint-regret} applies in every phase.
The initial baseline has gap at most $e_0=2$, and
Lemma~\ref{lem:phase} inductively supplies a baseline of gap at most
$e_k$ for phase $k$. Every pair played in that phase then has gap at
most $5e_k/4$.

Let $N_k$ denote the duration of phase $k$ and attach a phase subscript
to its parameters. The coordinate bounds in Lemma~\ref{lem:duration}
hold on every history. Starting from the uniform baseline, they give
$p_i,q_j\ge1/(d5^k)$ in phase $k$, and hence
\[
  R_{0,k}\le2\log d+2k\log5,
  \qquad
  L_k=\log\frac{(2d+2)2^{k+1}}{\delta}.
\]
With $B_k=\log(d/\delta)+k+1$, the parameter choices in
\eqref{eq:phase-parameters} therefore imply
\[
  H_k=\mathcal{O}(B_k),\qquad
  \tau_{0,k}=\mathcal{O}\!\left(\frac{dB_k^2}{e_k^2}\right).
\]
Lemma~\ref{lem:duration} now gives a universal constant $C_1$ such that
\begin{equation}\label{eq:phase-length-global}
  N_k\le C_1\frac{dB_k^3}{e_k^2}.
\end{equation}

If global round $t$ lies in phase $k$, then
$t\le\sum_{j=0}^kN_j$. Since
$e_j^{-2}=4^{j-k}e_k^{-2}$ and $B_j\le B_k$, summing
\eqref{eq:phase-length-global} yields
\begin{equation}\label{eq:time-to-phase}
  t\le\frac{4C_1dB_k^3}{3e_k^2}.
\end{equation}
To bound $B_k$ in terms of $t$, note that $a_t\le1$ implies
$N_j\ge3\tau_{0,j}\ge4^j$. The last inequality follows from
\eqref{eq:phase-parameters}, $H_j\ge1$, and $e_j^{-2}=4^{j-1}$.
For $k\ge1$, phase $k-1$ is complete before round $t$, so
$t\ge N_{k-1}\ge4^{k-1}$. Thus $k\le1+\log_4t$ for every phase
containing $t$, including $k=0$, and
$B_k=\mathcal{O}(\log(dt/\delta))$.

On $\mathcal E$, combining the gap bound $5e_k/4$ with
\eqref{eq:time-to-phase} gives
\[
  \operatorname{Gap}(x_t,y_t)
  \le\frac54\sqrt{\frac{4C_1dB_k^3}{3t}}
  \le C\sqrt{\frac dt}
     \left[\log\!\left(\frac{dt}{\delta}\right)\right]^{3/2}
\]
for a universal constant $C$. This holds for every $t\ge1$ on
$\mathcal E$, proving the theorem.
\end{proof}

\section{Dimension dependence of the log-barrier bound}
\label{app:fiegel-dimension}

We derive the dimension dependence reported for
\citet{fiegel2026optimal} in Table~\ref{tab:comparison}.
Their Theorem~5.2 bounds the duality gap by
$2K\tau\log((t+T_0)/\delta)(t+T_0)^{-1/4}$, where $K$ is the total
number of actions of both players. Their Assumption~5.1 leaves constants
depending on $K$ implicit. We give an explicit parameter choice below.

For our setting, $K=2d$. To distinguish their parameters from ours,
write $\eta_{\mathrm F},\tau_{\mathrm F},T_{\mathrm F}$ for their
$\eta,\tau,T_0$. Set $\delta_*=\min\{\delta,\exp(-2)\}$ and let $M$
be a sufficiently large universal constant. Define
\[
  \Lambda=\log\!\left(\frac{MK}{\delta_*}\right),\qquad
  \eta_{\mathrm F}=\frac{1}{M\Lambda^2},\qquad
  \tau_{\mathrm F}=M^2K\Lambda^2,\qquad
  T_{\mathrm F}=\left\lceil\tau_{\mathrm F}^4\right\rceil.
\]
Their learning rate and regularization schedules are then
$\eta_{\mathrm F}(t+T_{\mathrm F})^{-3/4}$ and
$\tau_{\mathrm F}\log((t+T_{\mathrm F})/\delta_*)
(t+T_{\mathrm F})^{-1/4}$, respectively.
In particular, $\eta_{\mathrm F}\tau_{\mathrm F}=MK$, and
\[
  \eta_{\mathrm F}\tau_{\mathrm F}
  \le\frac{T_{\mathrm F}}{(\log T_{\mathrm F})^4},
  \qquad
  T_{\mathrm F}
  \le[\log(1/\delta_*)]^2\tau_{\mathrm F}^4,
\]
as required by their Assumption~5.1.

It remains to check the dimension factors hidden in the smallness
conditions of their proof. Let $\theta_0$ denote their initial
regularization strength:
\[
  \theta_0=\tau_{\mathrm F}\log(T_{\mathrm F}/\delta_*)
                     T_{\mathrm F}^{-1/4}.
\]
Since $\log(T_{\mathrm F}/\delta_*)=\mathcal{O}(\Lambda)$, we have
$\theta_0=\mathcal{O}(\Lambda)$. Their Lemma~6.2, Lemma~B.1, and
Proposition~C.5 use the quantities
\[
  L_{\mathrm F}=\sqrt K,\qquad
  \sigma_{\mathrm F}=2+\theta_0\sqrt K,
  \qquad
  \rho_{\mathrm F}
  =\sigma_{\mathrm F}\eta_{\mathrm F}
      (4L_{\mathrm F}+2\theta_0)+\frac{\theta_0\sqrt K}{4}.
\]
The proposed parameters give
\[
  \sigma_{\mathrm F}=\mathcal{O}(\sqrt K\Lambda),\qquad
  \rho_{\mathrm F}
     =\mathcal{O}\!\left(\frac{K}{M\Lambda}+\sqrt K\Lambda\right),
\]
with universal constants independent of $M,K,\delta$.

The term caused by the changing regularization requires
$K/(\eta_{\mathrm F}\tau_{\mathrm F})$ to be sufficiently small.
Here this ratio equals $1/M$. The remaining drift and fluctuation
coefficients are controlled by
\[
\max\left\{
  \frac{\eta_{\mathrm F}^2L_{\mathrm F}\sigma_{\mathrm F}^3}
       {\tau_{\mathrm F}^2},\,
  \frac{\eta_{\mathrm F}^2\sigma_{\mathrm F}^4}
       {\tau_{\mathrm F}^2},\,
  \frac{\rho_{\mathrm F}\eta_{\mathrm F}\sigma_{\mathrm F}^2}
       {\tau_{\mathrm F}^2},\,
  \frac{\rho_{\mathrm F}^2}{\tau_{\mathrm F}^2},\,
  \frac{\eta_{\mathrm F}\sigma_{\mathrm F}^2}{\tau_{\mathrm F}},\,
  \frac{\rho_{\mathrm F}}{\tau_{\mathrm F}}
\right\}
=\mathcal{O}(M^{-2}).
\]
Denote these six coefficients, in the displayed order, by
$c_1,\ldots,c_6$. For example, for $M\ge1024$,
$\theta_0\le13\Lambda$, $\sigma_{\mathrm F}\le14\sqrt K\Lambda$,
and $\rho_{\mathrm F}/\tau_{\mathrm F}\le5/M^2$, which give
$\max_i c_i\le5/M^2$. The same choice ensures
\[
  \sigma_{\mathrm F}\eta_{\mathrm F}T_{\mathrm F}^{-3/4}
  \le\frac{14}{M^7K^{5/2}\Lambda^7}\le\frac1{32}.
\]
This verifies their Assumption~C.1 and the weaker condition
$\sigma_{\mathrm F}\eta_{\mathrm F}T_{\mathrm F}^{-3/4}
\le1/\sqrt{12}$ of their Lemma~B.3.

We next make the residual recursion explicit to check that its
first-order terms are absorbed as well. Write $w_t$ for their combined
iterate, $p_t$ for the regularized residual defined in their Appendix~B,
and $D_t=\|p_t\|_{*,w_t}^2$ for its squared dual local norm. For
$t\ge0$, set
\[
  s=t+T_{\mathrm F},\qquad h_t=\log(s/\delta_*),\qquad
  a_t=\eta_{\mathrm F}s^{-3/4},\qquad
  \theta_t=\tau_{\mathrm F}h_ts^{-1/4}.
\]
Let $v_t=F_{\theta_{t+1}}(w_{t+1})-F_{\theta_t}(w_t)$, where
$F_\theta$ is their regularized operator, and define
\[
  Z_t=\|p_t\|_{*,w_{t+1}}^2-D_t
             +2\langle v_t,p_t\rangle_{*,w_t}.
\]
The squared-norm expansion in their Appendix~D, with the full factor
$2$ in its cross term, gives
\[
  D_{t+1}-D_t
  \le Z_t+32\rho_{\mathrm F}\sigma_{\mathrm F}^2a_ts^{-3/4}
             +\rho_{\mathrm F}^2s^{-3/2}.
\]
Here we used their Proposition~C.5 and the pathwise norm-variation
bound of Proposition~C.3. For the conditional mean of the norm
variation, their Proposition~C.2 and Lemma~C.4 give the bound
\[
  \mathbb E_t[\|p_t\|_{*,w_{t+1}}^2-D_t]
  \le(2a_t\sqrt{D_t}+68\sigma_{\mathrm F}^2a_t^2)D_t
  \le2a_tD_t^{3/2}+68\sigma_{\mathrm F}^4a_t^2,
\]
where $\mathbb E_t$ conditions on the history before this update and
$D_t\le\sigma_{\mathrm F}^2$. Combining this with their Lemma~D.3
and Proposition~C.5 yields
\begin{align*}
  \mathbb E_t Z_t
  &\le-2a_t\theta_{t+1}D_t+2a_tD_t^{3/2}
       +\frac{\theta_t}{2s}\sqrt{KD_t}
       +(128L_{\mathrm F}\sigma_{\mathrm F}^3
                         +68\sigma_{\mathrm F}^4)a_t^2,\\
  |Z_t-\mathbb E_t Z_t|
  &\le(16\eta_{\mathrm F}\sigma_{\mathrm F}^2+8\rho_{\mathrm F})
             s^{-3/4}\sqrt{D_t}.
\end{align*}

To leave enough contraction to absorb the $2a_tD_t^{3/2}$ term, use
the normalization
\[
  U_t=\frac{16D_t}{\tau_{\mathrm F}^2h_t}.
\]
The stopping boundary $U_t\le h_t/\sqrt s$ in their Lemma~6.5 now
corresponds to $D_t\le\theta_t^2/16$. Before this boundary is crossed,
\[
  2a_tD_t^{3/2}\le\frac12a_t\theta_tD_t,
  \qquad
  \frac{\theta_t}{2s}\sqrt{KD_t}
  \le\frac14a_t\theta_tD_t+\frac{K\theta_t}{4a_ts^2}.
\]
Their Lemma~B.2 gives
$\theta_{t+1}\ge(1-1/(4s))\theta_t$. Since $s\ge2$, the first-order
terms above sum to at most
\[
  \left(-2+\frac1{2s}+\frac12+\frac14\right)a_t\theta_tD_t
  \le-a_t\theta_tD_t\le-\frac{D_t}{s},
\]
where the last inequality uses $\eta_{\mathrm F}\tau_{\mathrm F}h_t\ge1$.
As $h_{t+1}\ge h_t$ and $D_{t+1}\ge0$, the normalized recursion is
\[
  U_{t+1}\le(1-1/s)U_t+b_{t+1}+W_{t+1},\qquad
  W_{t+1}=\frac{16(Z_t-\mathbb E_tZ_t)}{\tau_{\mathrm F}^2h_t},
\]
with $\mathbb E_tW_{t+1}=0$ and remainder bounds
\begin{align*}
  b_{t+1}
  &\le\left[\frac4M+16(128c_1+68c_2+32c_3+c_4)\right]s^{-3/2}
   \le(s+1)^{-3/2},\\
  |W_{t+1}|
  &\le(64c_5+32c_6)s^{-3/4}\sqrt{U_t},\qquad
  W_{t+1}^2\le\tfrac12(s+1)^{-3/2}U_t.
\end{align*}
These inequalities hold for $M\ge1024$ by $\max_i c_i\le5/M^2$.
At the uniform initialization, the log-barrier gradient lies in the
normal cone, so $D_0\le4$ and
\[
  U_0\le\frac{64}{\tau_{\mathrm F}^2\log(T_{\mathrm F}/\delta_*)}
       \le\frac{\log T_{\mathrm F}}{\sqrt{T_{\mathrm F}}}.
\]
Their Lemma~6.5 applies to this recursive upper bound as well: its
exponential-supermartingale proof uses only the monotonicity of the
exponential at that step. Thus, with probability at least
$1-\delta_*$, simultaneously for all $t\ge0$,
$D_t\le\theta_t^2/16$. Their Lemma~6.4 then gives duality gap at
most $2K\theta_t$ in their $[0,1]$ loss normalization.

For our payoff range $[-1,1]$, run their algorithm on the losses
$(1+R_t)/2$ and let $(x_t,y_t)$ denote its strategies in this appendix.
The duality gap in our normalization is twice the gap for these losses.
Consequently, with probability at least $1-\delta$, simultaneously for
every $t\ge1$,
\[
  \operatorname{Gap}(x_t,y_t)
  \le 4K\tau_{\mathrm F}
       \log\!\left(\frac{t+T_{\mathrm F}}{\delta_*}\right)
       (t+T_{\mathrm F})^{-1/4}
  =\widetilde{\mathcal{O}}(d^2t^{-1/4}).
\]
Indeed, $\tau_{\mathrm F}=\widetilde{\mathcal{O}}(K)$ and
$T_{\mathrm F}=\widetilde{\mathcal{O}}(K^4)$, so the logarithm introduces
no additional polynomial dependence on $K$. Solving this bound for a
target gap $\varepsilon$ gives the round bound
$\widetilde{\mathcal{O}}(d^8/\varepsilon^4)$ stated in
Table~\ref{tab:comparison}.